\documentclass{article}

\usepackage{microtype}
\usepackage{graphicx}
\usepackage{subcaption}
\usepackage{booktabs} 
\usepackage[framemethod=tikz]{mdframed}
\usepackage[nointegrals]{wasysym} 
\usepackage{xcolor}  
\usepackage{hyperref}
\usepackage{fontawesome5}

\usepackage{tikz}

\newcommand{\bluecircle}{\tikz[baseline=-0.5ex]\draw[blue,fill=blue,opacity=0.3] (0,0) circle (0.5ex);}

\usepackage[accepted]{icml2026}

\usepackage{enumitem}
\usepackage{amsmath}
\usepackage{amssymb}
\usepackage{mathtools}
\usepackage{amsthm}
\usepackage{thmtools}
\usepackage{thm-restate}



\usepackage{graphicx}
\let\oldtextsc\textsc
\renewcommand{\textsc}[1]{\scalebox{0.93}{\oldtextsc{#1}}}

\usepackage{algpseudocode}
\usepackage{algorithm}

\usepackage[capitalize,noabbrev]{cleveref}

\theoremstyle{plain}
\newtheorem{theorem}{Theorem}[section]
\newtheorem{prop}[theorem]{Proposition}

\theoremstyle{definition}

\theoremstyle{remark}

\newcommand{\R} {\mathbb R}

\newcommand{\dXt}{\mathrm{d}X_t}

\newcommand{\dt}{\mathrm{d}t}

\newcommand{\dx}{\mathrm{d}x}
\newcommand{\dWt}{\mathrm{d}W_t}

\DeclareMathSymbol{\shortminus}{\mathbin}{AMSa}{"39}

\usepackage[disable,textsize=tiny]{todonotes}

\usepackage[skip=3pt]{caption}
\usepackage{booktabs}
\usepackage{multirow}
\usepackage{graphicx} 
\usepackage{tikz}
\usetikzlibrary{positioning,calc,backgrounds,fit,arrows.meta, bending, decorations.markings, shapes.geometric}
\definecolor{popblue}{HTML}{4A90E2}

\definecolor{forecastcolour}{HTML}{8601AF}
\definecolor{costOrange}{HTML}{F5A623}
\definecolor{costRed}{HTML}{E02020}
\tikzset{
    boid/.pic={
        \draw[fill=popblue, thin, rotate=#1] (0,0) -- (-0.3, 0.15) -- (-0.2, 0) -- (-0.3, -0.15) -- cycle;
    },
    frameObserved/.style={
        draw=gray!80, thick
    },
    frameUnobserved/.style={
        draw=gray!50, thick, dashed
    },
    frameForecast/.style={
        draw=forecastcolour, thick, dashed
    },
    labelObserved/.style={
        font=\bfseries\Large, text=black
    },
    labelUnobserved/.style={
        font=\bfseries\Large, text=gray!70
    },
    labelForecast/.style={
        font=\bfseries\Large, text=forecastcolour
    }
}

\tikzset{
    boid/.pic={
        \draw[fill=popblue, thin, rotate=#1] (0,0) -- (-0.3, 0.15) -- (-0.2, 0) -- (-0.3, -0.15) -- cycle;
    },
    frameObserved/.style={
        draw=gray!80, thick
    },
    frameUnobserved/.style={
        draw=gray!50, thick, dashed 
    },
    labelObserved/.style={
        font=\bfseries\Large, text=black
    },
    labelUnobserved/.style={
        font=\bfseries\Large, text=gray!70 
    }
}

\newcommand{\flock}[3]{
    \begin{scope}[shift={(#1)}]
        \foreach \k in {1,...,#3} {
            \pgfmathsetmacro{\dx}{rand*0.4} 
            \pgfmathsetmacro{\dy}{rand*0.4}
            \pgfmathsetmacro{\dangle}{rand*30}
            \path (\dx, \dy) pic {boid={#2+\dangle}};
        }
    \end{scope}
}

\icmltitlerunning{A Call to Lagrangian Action: Learning Population Mechanics from Temporal Snapshots}

\begin{document}

\twocolumn[
  \icmltitle{A Call to Lagrangian Action: \texorpdfstring{\\}{ } Learning Population Mechanics from Temporal Snapshots}



  \icmlsetsymbol{equal}{*}

  \begin{icmlauthorlist}
  \icmlauthor{Vincent Guan}{ubc,mila}
  \icmlauthor{Lazar Atanackovic}{uofa,amii,broad}
  \icmlauthor{Kirill Neklyudov}{mila,udem,courtois}
\end{icmlauthorlist}

\icmlaffiliation{ubc}{University of British Columbia}

\icmlaffiliation{broad}{Broad Institute of MIT and Harvard}
\icmlaffiliation{uofa}{University of Alberta}
\icmlaffiliation{amii}{Alberta Machine Intelligence Institute}

\icmlaffiliation{mila}{Mila - Quebec AI Institute}
\icmlaffiliation{udem}{Universit\'e de Montr\'eal}
\icmlaffiliation{courtois}{Institut Courtois}
\vspace{2em}


\icmlcorrespondingauthor{Vincent Guan}{vincentguan23@gmail.com}
\icmlcorrespondingauthor{Kirill Neklyudov}{k.necludov@gmail.com}


]



\printAffiliationsAndNotice{}  

\begin{abstract}
The population dynamics of molecules, cells, and organisms are governed by a number of unknown forces. In the last decade, population dynamics have predominantly been modeled with Wasserstein gradient flows.
However, since gradient flows minimize free energy, they fail to capture important dynamical properties, such as periodicity. In this work, we propose a change in perspective by considering dynamics that minimize a \emph{population-level action} under a damped Wasserstein Lagrangian. By deriving the corresponding Hamiltonian equations of motion, we formalize \emph{Wasserstein Lagrangian Mechanics}, a structured class of second-order dynamics that encompasses classical mechanics, quantum mechanics, and gradient flows. We then propose \texttt{WLM} as the first algorithm that learns these second-order dynamics from observed marginals, without specifying the Lagrangian. By directly learning the population mechanics, \texttt{WLM} can both forecast and interpolate unseen marginals, and outperforms existing gradient flow and flow matching methods across a wide range of dynamics, including vortex dynamics, embryonic development, and flocking.
\end{abstract}

\vspace{-2em}

\section{Introduction}
Population dynamics govern every level of nature, from the Brownian motion of molecules in a fluid, to the embryonic differentiation of developing cells, to the swarm behaviour of animals \citep{may1987chaos,codling2008random}. Identifying the underlying forces that drive the evolution of a given population is therefore a fundamental, but difficult problem \citep{newman2014modelling,agozzino2020cells}.

\begin{figure}[t]
    \centering
    \resizebox{0.75\columnwidth}{!}{%
    \begin{tikzpicture}[
        >=Latex, 
        thick,
        pics/population/.style={
            code={
                \begin{scope}[local bounding box=pop]
                    \foreach \i in {1,...,18} {
                        \fill[popblue, opacity=0.8] (rand*0.42, rand*0.34) circle (2.6pt);
                    }
                \end{scope}
            }
        }
    ]

        \begin{scope}
            \draw[-] (-3,0) -- (3,0);
            \draw[->] (0,0) -- (0,6) 
                node[above, align=center] {\small Free Energy \\ $\mathcal{F}[\rho_t]$}; 
            
            \pgfmathsetmacro{\wL}{2.5}
            \pgfmathsetmacro{\kL}{5/(\wL*\wL)}
            \draw[thick] plot[domain=-\wL:\wL, samples=200] (\x, {\kL*\x*\x});
            
            \pgfmathsetmacro{\wLArrow}{2.25}
            \pgfmathsetmacro{\kLArrow}{5/(\wLArrow*\wLArrow)}
            
            \pgfmathsetseed{100}
            \path (-2, 4) pic {population};
            
            \pgfmathsetseed{200}
            \path (-1, 1) pic {population};
            
            \pgfmathsetseed{300}
            \path (0, 0) pic {population};

            \draw[->, line width=1.5pt] 
    plot[domain=-1.96:-0.30, samples=150] (\x, {\kLArrow*\x*\x});
            
            \node[font=\bfseries, align=center] at (0,-1.2) {Wasserstein \\ Gradient Flow};
            \node[text=gray, font=\small, align=center] at (0,-2.2) {(Dissipative $\gamma \to \infty$)};
        \end{scope}

        \begin{scope}[xshift=6.5cm] 
            \draw[-] (-3,0) -- (3,0);
            \draw[->] (0,0) -- (0,6) 
                node[above, align=center] {\small Potential Energy \\ $\mathcal{U}[\rho_t]$}; 
            
            \pgfmathsetmacro{\wOut}{2.5}
            \pgfmathsetmacro{\kOut}{5/(\wOut*\wOut)}
            \draw[thick] plot[domain=-\wOut:\wOut, samples=220] (\x, {\kOut*\x*\x});
            
            \pgfmathsetmacro{\wSolid}{2.25}   
            \pgfmathsetmacro{\wDashed}{1.95}  
            \pgfmathsetmacro{\kSolid}{5/(\wSolid*\wSolid)}
            \pgfmathsetmacro{\kDashed}{5/(\wDashed*\wDashed)}
            
            \pgfmathsetseed{100}
            \path (-2, 4) pic {population};
            
            \pgfmathsetseed{300}
            \path (0, 0) pic {population};
            
            \pgfmathsetseed{400}
            \path (2, 4) pic {population};


            \draw[->, line width=1.5pt] 
                plot[domain=-1.96:1.96, samples=200] (\x, {\kSolid*\x*\x});

            \draw[->, dashed, darkgray, line width=1.5pt] 
                plot[domain=1.70:-1.70, samples=200] (\x, {\kDashed*\x*\x});

            \node[font=\bfseries, align=center] at (0,-1.2) {Wasserstein \\ Lagrangian Mechanics};
            \node[text=gray, font=\small, align=center] at (0,-2.2) {(Conservative if $\gamma = 0$)};
        \end{scope}

    \end{tikzpicture}%
    }
    \caption{Wasserstein gradient flows describe first-order population dynamics that minimize the free energy $\mathcal{F}[\rho_t]$. We propose Wasserstein Lagrangian mechanics (WLM), which describe a richer class of damped second-order dynamics, based on the population-level potential energy $\mathcal{U}[\rho_t]$. Given the same quadratic functional, gradient flows dissipate until equilibrium, while WLM produces oscillating dynamics if $\gamma =0$, and degenerate to gradient flow dynamics in the overdamped limit $\gamma \to \infty$.}
    \label{fig:dynamics_comparison}
\end{figure}
If domain knowledge is available, then a prior can be prescribed on the dynamics \citep{shen2024multi,simpson2025inference}, but in most cases, we are unable to obtain an accurate reference process. The predominant approach in machine learning and computational biology has been to model population dynamics as a gradient flow \citep{hashimoto2016learning, weinreb2018fundamental, lavenant2021towards, bunne2022proximal, terpin2024learning, guangradient}. Indeed, gradient flows have a rich mathematical foundation \citep{ambrosio2008gradient} and admit optimization schemes for tractable inference \citep{jordan1998variational}. However, since gradient flows minimize free energy, they can only characterize purely dissipative aperiodic dynamics. Flow-based methods \citep{atanackovic2024meta, kapusniak2024metric, petrovic2025curly} have been developed to address these limitations and produce higher quality interpolations, but they cannot forecast beyond the observed horizon.

\begin{figure*}[t]
    \centering
    \resizebox{\linewidth}{!}{%
    \begin{tikzpicture}[>=Latex]
        
        
        \tikzset{
            boid/.pic={
                \draw[fill=popblue, thin, rotate=#1] (0,0) -- (-0.3, 0.15) -- (-0.2, 0) -- (-0.3, -0.15) -- cycle;
            },
            frameObserved/.style={draw=gray!80, thick, rectangle},
            frameUnobserved/.style={draw=gray!50, thick, dashed, rectangle},
            frameForecast/.style={draw=forecastcolour, thick, dashed, rectangle},
            labelObserved/.style={font=\bfseries\Large, text=black},
            labelUnobserved/.style={font=\bfseries\Large, text=gray!70},
            labelForecast/.style={font=\bfseries\Large, text=forecastcolour},
            manifoldSurf/.style={shade, left color=gray!20, right color=gray!5, middle color=gray!10, draw=gray!50, thin},
            manifoldPoint/.style={circle, fill=black, inner sep=2.5pt},
            perturbationPath/.style={-, red!80, thick, dashed, opacity=0.8}
        }

        \def\framesize{5}
        \def\xspacing{6.0} 

        
        \begin{scope}[scale=1.4, local bounding box=manifoldBox]
            
            \coordinate (TL) at (-2.2, 1.8);
            \coordinate (TR) at (2.2, 1.2);
            \coordinate (BR) at (2.2, -1.8);
            \coordinate (BL) at (-2.2, -1.2);

            \draw[manifoldSurf] 
                (TL) to[out=-50, in=160] (TR)   
                     -- (BR)                    
                     to[out=160, in=-50] (BL)   
                     -- cycle;                  

            \coordinate (m0) at (-1.5, -0.2); 
            \coordinate (m1) at (-0.2, -0.8); 
            \coordinate (m2) at (1.0, -0.4);  
            \coordinate (m3) at (1.7, 0.6);   

            \draw[perturbationPath] plot [smooth, tension=0.8] coordinates {
                (m0) (-0.9, 0.4) (-0.1, -0.1) (0.6, 0.34) (m2)
            };
            
            \draw[perturbationPath] plot [smooth, tension=0.8] coordinates {
                (m0) (-1.0, -1.1) (-0.2, -1.5) (0.5, -1.0) (m2)
            };

            \draw[-, line width=2pt, dashed,gray!50] (m0) to[out=-30, in=180] (m1) to[out=0, in=210] (m2);
            
            \draw[->, line width=2pt, forecastcolour, dashed] (m2) to[out=30, in=200] (m3);

            \node[manifoldPoint, label={left:$p_0$}] at (m0) {};
            \node[manifoldPoint, fill=gray!60, label={below:$p_{\Delta t}$}] at (m1) {};
            \node[manifoldPoint, label={right:$p_1$}] at (m2) {};
            \node[manifoldPoint, fill=forecastcolour, label={above:\textcolor{forecastcolour}{$p_{1+\Delta t}$}}] at (m3) {};
        \end{scope}

        
        \def\panelStart{8.0} 

        \begin{scope}[xshift=\panelStart cm, local bounding box=panelsBox]

            \begin{scope}[xshift=0cm]
                \node[frameObserved, minimum size=\framesize cm] (p0) at (0,0) {};
                \node[above, labelObserved] at (p0.north) {$p_0$};
                \flock{-1.8, 0}{0}{8} \flock{1.8, 0}{180}{8}
            \end{scope}

            \begin{scope}[xshift=\xspacing cm]
                \node[frameUnobserved, minimum size=\framesize cm] (p1) at (0,0) {};
                \node[above, labelUnobserved] at (p1.north) {$p_{\Delta t}$};
                \flock{-1.0, 0}{0}{8} \flock{1.0, 0}{180}{8}
            \end{scope}

            \begin{scope}[xshift=2*\xspacing cm]
                \node[frameObserved, minimum size=\framesize cm] (p2) at (0,0) {};
                \node[above, labelObserved] at (p2.north) {$p_1$};
                \flock{-0.3, 0.3}{45}{8} \flock{0.3, -0.3}{225}{8}
            \end{scope}

            \begin{scope}[xshift=3*\xspacing cm]
                \node[frameForecast, minimum size=\framesize cm] (p3) at (0,0) {};
                \node[above, labelForecast] at (p3.north) {$p_{1+\Delta t}$};
                \flock{-0.1, 1.4}{70}{8} \flock{0.1, -1.4}{250}{8}
            \end{scope}

            \draw[->, line width=2pt, gray!40, shorten >=5pt, shorten <=5pt] (p0.east) -- (p1.west);
            \draw[->, line width=2pt, gray!40, shorten >=5pt, shorten <=5pt] (p1.east) -- (p2.west);
            \draw[->, line width=2pt, forecastcolour, dashed, shorten >=5pt, shorten <=5pt] (p2.east) -- (p3.west);

        \end{scope}

        
        \node[font=\bfseries\LARGE] (labelb) at ([yshift=-1.25cm]panelsBox.south) {(b) Wasserstein Hamiltonian mechanics 
        (visualized in $\mathbb{R}^d$)};

        \node[font=\bfseries\LARGE] at (manifoldBox.south |- labelb) {(a) Wasserstein Lagrangian flow in $\mathcal{P}(\mathbb{R}^d)$};
    \end{tikzpicture}%
    }
    \caption{\textbf{Learning population mechanics with WLM:} 
    In \textbf{(a)}, we illustrate the principle of least Wasserstein action \eqref{eq: minimize_lagrangian_action}: given observed marginals $p_0$ and $p_1$, the true interpolants form a minimal action curve in the space of densities, with respect to a population-level Lagrangian action. {Alternative curves of densities} have higher action and are drawn in red. Wasserstein least action induces Hamiltonian mechanics on the population, which we visualize in the state space for a population of interacting Boids in \textbf{(b)}. Our method (\texttt{WLM}) learns these mechanics to \textcolor{gray}{interpolate} and \textcolor{violet}{forecast} unseen marginals (dashed grey and violet panels).}
    \label{fig:boids_least_action_dynamics}
\end{figure*}

A change of perspective is therefore necessary to model more general population dynamics and to forecast future dynamics. As a natural starting point, we consider \emph{the principle of least Wasserstein action}: 
\begin{mdframed}[hidealllines=true,backgroundcolor=blue!8]
The population marginals $(p_t)_{0 \le t \le T}$ minimize the action of some population-level Lagrangian $\mathcal{L}[\rho_t, \dot{\rho}_t,t]$.
\end{mdframed}
Least action in the Wasserstein space of probability distributions has been proposed in recent literature \citep{chow2020wasserstein, neklyudov2023action}, and states that the population evolves according to the most efficient path, as determined by the population-level Lagrangian.
By defining the Lagrangian on the Wasserstein space, rather than the state space, this formulation captures emergent dynamics. For example, the evolution could be determined by interactions between individuals, as well as the population's intrinsic properties, like its central mass. Moreover, by working directly on the space of marginals, this framework is directly amenable to settings where it is impossible to track trajectories, such as single-cell biology. While least action is intuitively attractive, gradient flow and flow matching remain the prevailing machine learning approaches for learning population dynamics. To motivate least action as an advantageous framework, two key questions must be answered:
\begin{enumerate}[leftmargin=*]
    \item Does Wasserstein least action capture more expressive dynamics than Wasserstein gradient flows?
    \item How can least action dynamics be inferred from observational data, without specifying the true Lagrangian? 
\end{enumerate}
\paragraph{Contributions.}
In this work, we propose \emph{Wasserstein Lagrangian Mechanics} (WLM), which \emph{re-frames learning population dynamics as learning the population mechanics of an evolving population}. We prove that WLM is significantly more expressive than Wasserstein gradient flows (see \cref{fig:dynamics_comparison}), and complement our theory by developing an algorithm that learns the mechanics directly from data. 
We outline our contributions and the structure of our paper:
\begin{itemize}[leftmargin=*,noitemsep]
    \item \textbf{Theory:} In \cref{sec:wlm_theory}, we formalize WLM by deriving the Hamiltonian equations of motion from the least action principle under a damped Wasserstein Lagrangian. These equations of motion describe a structured class of population mechanics, which encompasses classical mechanics, quantum mechanics, and gradient flows.
    \item \textbf{Method:} 
    In \cref{sec:learning_wlm}, we propose \texttt{WLM}, a neural mechanics model that learns population mechanics directly from data. \texttt{WLM} learns the system's Hamiltonian mechanics by parameterizing the population-level potential energy and training the model to predict its next $k$ marginals.
    \item \textbf{Experiments: } In \cref{sec:experiments}, we demonstrate \texttt{WLM}'s ability to learn gradient flows, curling dynamics from ocean vortex data, cell dynamics from scRNA data, and emergent dynamics from Boids. By learning the population mechanics, \texttt{WLM} consistently outperforms state-of-the-art methods for
    both \textit{forecasting} and \textit{interpolation}, and does so without requiring a reference process.
\end{itemize}

\paragraph{Conflict of Interest Disclosure.} The authors declare no financial conflicts of interest.



\vspace{-1em}
\section{Wasserstein Lagrangian Mechanics}
\label{sec:wlm_theory}
In this section, we develop a mathematical framework for the mechanics of an evolving population. We first review population dynamics under the canonical continuity equation, and define population-level coordinates for flows that minimize a Wasserstein Lagrangian action. Then, from the Wasserstein least action principle, we determine the \emph{population mechanics} by deriving the Hamiltonian equations of motion. This characterizes the population as a damped second-order system, driven by the population-level potential energy $\mathcal{U}[\rho_t]$.
Finally, we use these insights to clarify the scope of dynamics that WLM encompasses, including gradient flows, as well as classical and quantum mechanics.

\vspace{-0.7em}
\subsection{Continuous Population Dynamics}
We recall an important theorem from \citet{ambrosio2008gradient}: continuous population dynamics in the Wasserstein-$2$ space $\mathcal{P}_2(\R^d)$ can always be represented by the continuity equation, with time-varying vector field $v_t = \nabla s_t$.
\begin{prop}[\text{\citet[Theorem 8.3.1]{ambrosio2008gradient}}]
\label{prop:unique_conservative_vector_field}Let $(\rho_t)_{0 \le t \le 1}$ be an absolutely continuous curve of measures in $\mathcal{P}_2(\R^d)$. Then, there exists a unique (up to a constant) potential $s_t: \R^d \to \R$ such that the population dynamics $(\rho_t)_{0 \le t \le 1}$ are defined by the evolution
\begin{align}
    \dot{\rho_t} = -\nabla \cdot (\rho_t \nabla s_t).
    \label{eq: continuity_eq}
\end{align}
Moreover, of all vector fields $v_t$, which generate $(\rho_t)_{0 \le t \le 1}$ via $\dot{\rho_t} = -\nabla \cdot (\rho_t v_t)$, $\nabla s_t$ is the one that minimizes the $W_2$ kinetic energy for a.e. $t \in [0,1]$:
\begin{align}
    \mathcal{K}[\rho_t, \dot{\rho}_t] = \frac12\int \|v_t\|^2 \rho_t dx.\label{eq:canonical_kinetic_energy}
\end{align}
\end{prop}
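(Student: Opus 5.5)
We follow the classical route of \citet[Theorem 8.3.1]{ambrosio2008gradient}: realize the velocity field as the minimal-norm element of a Hilbert space. Let $|\rho'|(t)$ be the metric derivative of the absolutely continuous curve; it exists for a.e.\ $t$ and lies in $L^1(0,1)$.

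\textbf{Step 1: the distributional time derivative is a bounded functional.} For $\varphi \in C_c^\infty(\R^d)$ we show
\[
\Bigl|\tfrac{d}{dt}\textstyle\int \varphi\, d\rho_t\Bigr| \le |\rho'|(t)\,\|\nabla\varphi\|_{L^2(\rho_t)}
\]
for a.e.\ $t$. To prove it, take optimal couplings $\gamma_{t,t+h}$ between $\rho_t$ and $\rho_{t+h}$. Write
\[
\varphi(y)-\varphi(x) = \nabla\varphi(x)\cdot(y-x) + o(|y-x|).
\]
Applying Cauchy--Schwarz gives
\[
\Bigl|\int\varphi\,d\rho_{t+h}-\int\varphi\,d\rho_t\Bigr| \le \|\nabla\varphi\|_{L^2(\rho_t)}\,W_2(\rho_t,\rho_{t+h}) + o(h).
\]
Dividing by $h$ and using $W_2(\rho_t,\rho_{t+h})/h \to |\rho'|(t)$ yields the bound. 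This is the step I expect to be the main obstacle. Making the remainder uniform requires a Lipschitz or $C^2$ test function and a Lebesgue-point argument in $t$. One must also fix a countable dense family of test functions so that the exceptional null set of times does not depend on $\varphi$.

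\textbf{Step 2: Riesz representation.} Fix a good time $t$. Define $V_t := \overline{\{\nabla\varphi : \varphi\in C_c^\infty\}}^{L^2(\rho_t)}$. By Step 1, the linear map $L_t(\nabla\varphi) := \tfrac{d}{dt}\int\varphi\,d\rho_t$ is well defined on gradients, since $\nabla\varphi=0$ $\rho_t$-a.e.\ forces $L_t=0$. It is also bounded with norm at most $|\rho'|(t)$. It therefore extends to $V_t$, and Riesz gives a unique $v_t\in V_t$ with
\[
\tfrac{d}{dt}\int\varphi\,d\rho_t = \int \nabla\varphi\cdot v_t\,d\rho_t .
\]
This is exactly the weak form of the continuity equation~\eqref{eq: continuity_eq}. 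Moreover $\|v_t\|_{L^2(\rho_t)}\le|\rho'|(t)$. Measurability of $t\mapsto v_t$ follows by approximating $v_t$ in the countable family from Step 1. Since $v_t$ is an $L^2(\rho_t)$-limit of gradients, it is a gradient in the generalized sense, written $v_t=\nabla s_t$. The potential $s_t$ is determined up to an additive constant: on the support, two potentials with the same gradient differ only by a constant, under the usual regularity conventions of the paper.

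\textbf{Step 3: minimality and uniqueness.} Let $w_t$ be any other field solving $\dot\rho_t=-\nabla\cdot(\rho_t w_t)$. Then $\int\nabla\varphi\cdot(w_t-v_t)\,d\rho_t=0$ for all $\varphi$, so $w_t-v_t\in V_t^\perp$. Pythagoras gives
\[
\|w_t\|^2_{L^2(\rho_t)}=\|v_t\|^2_{L^2(\rho_t)}+\|w_t-v_t\|^2_{L^2(\rho_t)}.
\]
Hence $\nabla s_t$ minimizes the kinetic energy~\eqref{eq:canonical_kinetic_energy} for a.e.\ $t$, and it is the unique minimizer in $L^2(\rho_t)$. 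Conversely, the Benamou--Brenier-type inequality $|\rho'|(t)\le\|w_t\|_{L^2(\rho_t)}$ shows the minimum equals $|\rho'|(t)$.
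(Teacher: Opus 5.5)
The paper gives no argument of its own here. It defers to \citet[Theorem 8.3.1]{ambrosio2008gradient}, with the minimality and uniqueness part coming from their Section 8.4. Your Steps 1--3 reconstruct that proof: the bound $|\tfrac{d}{dt}\int\varphi\,d\rho_t|\le|\rho'|(t)\,\|\nabla\varphi\|_{L^2(\rho_t)}$, Riesz representation on the closure of gradients, and orthogonality for the minimal norm.

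The one structural difference is where you apply Riesz. You apply it fibrewise in $L^2(\rho_t)$ at each good time. Ambrosio--Gigli--Savar\'e instead reparametrize the curve so that $|\rho'|\in L^2$, and then apply Riesz once in the space--time Hilbert space $L^2(\mu;\R^d)$, $\mu=\int\rho_t\,dt$, on the closure of $\{\nabla_x\varphi:\varphi\in C_c^\infty((0,1)\times\R^d)\}$.
\begin{itemize}
\item Their route yields a jointly Borel field $v(t,x)$ and the distributional continuity equation in one stroke. It then localizes to subintervals to obtain $\|v_t\|_{L^2(\rho_t)}\le|\rho'|(t)$.
\item Your route gives the time-by-time minimal-norm characterization directly. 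The cost is that you must assemble a jointly measurable $v$ from the fibrewise representatives. This is possible, e.g.\ by orthonormalizing the countable family measurably in $t$, but it is not automatic. You must also pass between the fibrewise and space--time forms of the equation, including for the competitor $w_t$ in Step 3.
\end{itemize}

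Step 1, by contrast, is easier than you expect. For $\varphi\in C_c^\infty$ the Taylor remainder is at most $\tfrac12\|\nabla^2\varphi\|_\infty W_2^2(\rho_t,\rho_{t+h})=O(h^2)$ at every time where the metric derivative exists, so no Lebesgue-point argument is needed. Moreover, applying the same bound to $\varphi-\varphi_n$ shows something stronger. If $t\mapsto\int\varphi_n\,d\rho_t$ is differentiable at such a time for every member of a countable $C^1$-dense family $\{\varphi_n\}$, then $t\mapsto\int\varphi\,d\rho_t$ is differentiable there for every $\varphi\in C_c^\infty$.

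The step that does not go through as written is the sentence turning $v_t$ into a potential. What your argument, and the cited theorem, actually establish is a unique minimal-norm field $v_t\in\mathrm{Tan}_{\rho_t}:=\overline{\{\nabla\varphi:\varphi\in C_c^\infty\}}^{L^2(\rho_t)}$. Such an element need not equal $\nabla s_t$ for an $s_t$ unique up to a single constant.
\begin{itemize}
\item \emph{Disconnected support.} If $\operatorname{supp}\rho_t$ is disconnected, $s_t$ is fixed only up to a separate constant on each component, and it is arbitrary off the support. So your claim that potentials with equal gradients on the support differ by a constant fails.
\item \emph{Degenerate densities.} Take a density on an annulus that vanishes like $|x_2|^\alpha$, $\alpha>1$, along the radial cut $\{x_2=0,\,x_1>0\}$. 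Then $\mathrm{Tan}_{\rho}$ contains the field $\nabla\theta=x^\perp/|x|^2$: smooth the jump of $\theta$ across the cut over a strip of width $1/n$, and the squared $L^2(\rho)$ error is $O(n^{1-\alpha})$. This field has circulation $2\pi$, so it is not $\rho$-a.e.\ the gradient of any $W^{1,1}_{\mathrm{loc}}$ function. It is also the minimal-norm velocity of the curve obtained by transporting this density along the divergence-free flow of $\nabla\theta$.
\end{itemize}

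To obtain the statement as written you need an extra hypothesis, e.g.\ that $\rho_t$ has a density locally bounded below by a positive constant. Then $L^2(\rho_t)$-limits of gradients are $L^2_{\mathrm{loc}}$-limits. Poincar\'e on balls gives $s_t\in W^{1,2}_{\mathrm{loc}}(\R^d)$ with $\nabla s_t=v_t$, and connectedness of $\R^d$ gives uniqueness up to a constant. This looseness is already in the paper's paraphrase of the cited theorem, so it is not a defect relative to the paper. Still, your write-up should either state the conclusion as $v_t\in\mathrm{Tan}_{\rho_t}$ or add such a hypothesis, rather than appeal to ``the usual regularity conventions''.
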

\cref{prop:unique_conservative_vector_field} offers a canonical representation for continuous population dynamics, but we note that population marginals can be produced by multiple laws on paths, unless identifiability conditions hold \citep{guan2024identifying, guangradient}. In this work, we assume that the population dynamics are absolutely continuous in $\mathcal{P}_2(\R^d)$ and use the canonical representation \eqref{eq: continuity_eq}. This allows us to conveniently rewrite the coordinates $(\rho_t, \dot{\rho}_t)$ as $(\rho_t, s_t)$, such that $\rho_t \in \mathcal{P}_2(\R^d)$ defines (Wasserstein) position and $\nabla s_t$ is a tangent vector \citep{ambrosio2008gradient}[Section 8.4].

\subsection{Wasserstein Least Action Population Dynamics}
\label{subsec:wlm_prelim}
We now consider population dynamics that minimize a Wasserstein Lagrangian action \citep{neklyudov2023computational}. As in classical mechanics, the simplest Wasserstein Lagrangian is the sum of the kinetic and (negative) potential energy functionals. The scalar energy values are defined at the population level, via the Wasserstein coordinates $(\rho_t, s_t)$: 
\begin{align*}
    \mathcal{L}[\rho_t, s_t] = \mathcal{K}[\rho_t, s_t] - \mathcal{U}[\rho_t].
\end{align*}
In this work, we fix $\mathcal{K}[\rho_t, s_t]$ as the $W_2$ kinetic energy \eqref{eq:canonical_kinetic_energy} with the canonical vector field $v_t = \nabla s_t$. The potential energy $\mathcal{U}[\rho_t]$ is the more crucial and flexible term, as it can be any functional of the density, including entropy, interaction kernels, and expectations. We also note that more complicated Lagrangians can explicitly depend on time. 

If we fix a Lagrangian $\mathcal{L}[\rho_t, s_t,t]$, and a set of marginals $\{p_{t_i}\}_{i=1}^{M}$, then the most efficient curve of marginals with respect to the Lagrangian action is the one that obeys
\begin{align}
    (p_t)_{0 \le t \le 1}  &= \arg \min_{\substack{\{\rho_t: [0,1] \to \mathcal{P}_2(\R^d) \\
    \label{eq: minimize_lagrangian_action}
    \text{s.t. } \rho_{t_i} = p_{t_i} \forall i=1,\ldots,M\}
    }}\mathcal{S}[\rho_t, s_t, t] \\
    \mathcal{S}[\rho_t, s_t, t]&=\int_0^1 \mathcal{L}[\rho_t, s_t,t] dt.
    \notag
\end{align}
We can also characterize the least action dynamics $(p_t)_{0 \le t \le 1}$ as having coordinates $(\rho_t, s_t)$ that obey the first-order conditions
\begin{align}
    \frac{\delta \mathcal{S}[\rho_t, s_t]}{\delta \rho_t}[h] = \frac{\delta \mathcal{S}[\rho_t, s_t]}{\delta s_t}[h] =0,
    \label{eq:first_order_optimality}
\end{align}
which holds for all test functions $h \in C_c^\infty(\R^d \times (0,1) )$. Note that this matches classical stationary action, but replaces pointwise derivatives with functional derivatives, since position and velocity are defined on $\mathcal{P}_2(\R^d)$.



\subsection{Hamiltonian Equations of Motion}
\label{sec: equations_of_motion}
We have seen that without loss of generality, population dynamics evolve according to the continuity equation \eqref{eq: continuity_eq} for some time-varying vector field $\nabla s_t$. A natural question to ask is if we can determine insights about the driving potential $s_t$. We can in fact derive its evolution equation under the principle of least Wasserstein action. 

\begin{theorem}[Hamiltonian equations of motion]
Let $(p_t)_{0 \le t \le 1}$ minimize action with respect to the damped Wasserstein Lagrangian 
\begin{align}
    \mathcal{L}[\rho_t, s_t,t] = e^{\gamma t} \left(\frac{1}{2}\int \|\nabla s_t\|^2 \rho_t dx - \mathcal{U}[\rho_t]\right)
    \label{eq:damped_lagrangian}
\end{align}
with damping $\gamma \ge 0$

Then, $(p_t)_{0 \le t \le 1}$ obeys the continuity equation \eqref{eq: continuity_eq}, such that the driving potential $s_t$ obeys 
\begin{align}
    \dot{s}_t(x)&= -\frac{1}{2}\|\nabla s_t(x)\|^2 - \frac{\delta \mathcal{U}[\rho_t]}{\delta \rho_t}(x) - \gamma s_t(x).\label{eq:potential_evolution}
\end{align}
The marginals $(p_t)_{0 \le t \le 1}$ can therefore be produced by sampling trajectories, obeying the second-order mechanics:
\begin{mdframed}[
    hidealllines=true, 
    backgroundcolor=gray!8, 
    nobreak=true,
    innertopmargin=0pt,    
    innerbottommargin=10pt  
]
\begin{align}
    \frac{d}{dt} x_t &= v_t \coloneq \nabla s_t(x_t), & x_0 \sim p_0 \\
    \frac{d}{dt} v_t &= -\nabla \frac{\delta \mathcal{U}[\rho_t]}{\delta \rho_t}(x_t) - \gamma v_t, & v_0 = \nabla s_0(x_0)
    \label{eq:equations_of_motion}
\end{align}
\end{mdframed}
\label{thm:hamiltonian_eqs_wlm}
\end{theorem}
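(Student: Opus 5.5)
The plan is to treat the action minimization as a constrained problem and enforce the continuity equation \eqref{eq: continuity_eq} with a Lagrange multiplier. By \cref{prop:unique_conservative_vector_field}, any absolutely continuous competitor curve $(\rho_t)$ has a canonical potential $s_t$. We may therefore regard $(\rho_t, s_t)$ as independent variables, subject to the constraint $\dot\rho_t + \nabla\cdot(\rho_t\nabla s_t)=0$. We introduce a multiplier $\lambda_t(x)$ and form the augmented functional
\begin{align*}
\tilde{\mathcal{S}} = \int_0^1 \Big[ e^{\gamma t}\Big(\tfrac12\int \|\nabla s_t\|^2\rho_t\,dx - \mathcal{U}[\rho_t]\Big) + \int \lambda_t\big(\dot\rho_t + \nabla\cdot(\rho_t\nabla s_t)\big)dx\Big]dt .
\end{align*}
Integrating by parts in $t$ and $x$ gives $-\int\dot\lambda_t\rho_t$ and $-\int\rho_t\nabla\lambda_t\cdot\nabla s_t$. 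The boundary terms in $t$ vanish because the variations $h\in C_c^\infty(\R^d\times(0,1))$ are compactly supported in time, which also respects the marginal constraints $\rho_{t_i}=p_{t_i}$ (we work on each interval between observation times).

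Next we impose the first-order conditions \eqref{eq:first_order_optimality}. Varying $s_t$ in the direction $h$ gives $\int \rho_t\nabla h\cdot(e^{\gamma t}\nabla s_t - \nabla\lambda_t)\,dx = 0$ for all $h$. So $e^{\gamma t}\nabla s_t - \nabla\lambda_t$ is $\rho_t$-orthogonal to gradients. Since $\nabla s_t$ is by definition the canonical (gradient, minimal-norm) field, we may take $\lambda_t = e^{\gamma t}s_t$ up to a spatially constant function of time, which is absorbed into the gauge freedom of $s_t$. Varying $\rho_t$ gives, pointwise on the support,
\begin{align*}
e^{\gamma t}\Big(\tfrac12\|\nabla s_t\|^2 - \tfrac{\delta\mathcal{U}}{\delta\rho_t}\Big) - \dot\lambda_t - \nabla\lambda_t\cdot\nabla s_t = 0 .
\end{align*}
Substituting $\lambda_t=e^{\gamma t}s_t$, so that $\dot\lambda_t = e^{\gamma t}(\dot s_t+\gamma s_t)$, and dividing by $e^{\gamma t}$, we get $\dot s_t = \tfrac12\|\nabla s_t\|^2 - \|\nabla s_t\|^2 - \tfrac{\delta\mathcal U}{\delta\rho_t} - \gamma s_t$. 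This is exactly \eqref{eq:potential_evolution}. The sign convention of the multiplier must be tracked carefully so that the kinetic term comes out as $-\tfrac12\|\nabla s_t\|^2$.

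For the particle mechanics \eqref{eq:equations_of_motion}, we sample $x_0\sim p_0$ and follow the flow $\dot x_t=\nabla s_t(x_t)$. By the standard superposition argument for the continuity equation, under enough regularity for the flow map to exist, the law of $x_t$ is $\rho_t=p_t$. Differentiating $v_t=\nabla s_t(x_t)$ gives $\dot v_t = \nabla\dot s_t(x_t) + \nabla^2 s_t(x_t)\nabla s_t(x_t)$. Taking the gradient of \eqref{eq:potential_evolution} and using $\nabla\tfrac12\|\nabla s\|^2 = \nabla^2 s\,\nabla s$, the Hessian terms cancel and we are left with $\dot v_t = -\nabla\tfrac{\delta\mathcal U}{\delta\rho_t}(x_t) - \gamma v_t$.

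The main obstacle is rigor in the variational step, not the algebra. Three issues need care: the formal functional derivatives, the justification that the multiplier can be identified with $e^{\gamma t}s_t$ (a gradient field on $\operatorname{supp}\rho_t$), and the fact that the $\rho$-equation holds only $\rho_t$-a.e. I would state the result at the formal level, as the paper's framing suggests. I would assume smooth, positive densities and sufficiently regular $s_t$ and $\tfrac{\delta\mathcal U}{\delta\rho}$ so that the characteristics are well defined, and note that extending \eqref{eq:potential_evolution} off the support is a free choice that does not affect the particle dynamics.
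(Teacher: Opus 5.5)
Your proposal is correct and follows essentially the paper's route. Both arguments enforce the continuity equation with a multiplier, take the first variation in $\rho_t$ (integrating by parts in time) to obtain the damped Hamilton--Jacobi equation, and then differentiate $v_t=\nabla s_t(x_t)$ along characteristics so that the Hessian terms cancel. The one difference is that the paper inserts the multiplier already set to $e^{\gamma t}s_t$, so its $s$-variation simply returns the continuity equation, whereas you keep a free $\lambda_t$ and derive $\nabla\lambda_t=e^{\gamma t}\nabla s_t$ from the $s$-variation; this justifies that identification more carefully and makes explicit the gauge and $\rho_t$-a.e.\ caveats the paper leaves implicit.
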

\begin{proof}
We present the full derivation in \cref{sec:app_proofs}.
\end{proof}

The Hamiltonian equations of motion \eqref{eq:equations_of_motion} offer a complementary mechanical perspective to the principle of least Wasserstein action \eqref{eq: minimize_lagrangian_action}, which can be visualized in the state space (see \cref{fig:boids_least_action_dynamics}). Intuitively, the population obeys second-order mechanics under a {`generalized Newton's law'}: individual particles are accelerated by a force determined by the population-level potential energy $\mathcal{U}[\rho_t]$. This yields an elegant interpretation: under the principle of least action, the population drives itself, by using the potential energy of its current configuration to evolve its transport vector field $\nabla s_t$. While these mechanics produce the true marginals at each time, we note that multiple laws on paths explain the same population dynamics (see \cref{app:non_iden_traj}).

We note that similar Wasserstein Hamiltonian equations have been derived in the conservative regime \citep[Proposition 2]{chow2020wasserstein}. By additionally considering the damping parameter $\gamma \ge 0$, we characterize a more expressive class of dissipative population mechanics, which includes gradient flows, as we will see in the next section.

\subsection{Expressivity of Dynamics under WLM}
\label{sec:expressivity_wlm}
In this section, we show that the class of damped second-order dynamics described by Wasserstein Lagrangian Mechanics encompasses a range of fundamental dynamics. First, we review the undamped setting, which characterizes classical and quantum mechanics. 

\begin{prop}[WLM describes classical mechanics] If $\gamma = 0$ and $\mathcal{U}[\rho_t] = \int U(x)\rho_t(x)dx$, then WLM describes Newtonian mechanics.
\end{prop}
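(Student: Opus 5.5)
We will derive this directly from \cref{thm:hamiltonian_eqs_wlm} by specializing the potential energy. The key step is computing the first variation of the linear functional $\mathcal{U}[\rho] = \int U(x)\rho(x)\,dx$. For any admissible perturbation $\rho + \epsilon h$ with $\int h\,dx = 0$, we have
\begin{align*}
\frac{d}{d\epsilon}\Big|_{\epsilon=0}\mathcal{U}[\rho+\epsilon h] = \int U(x) h(x)\,dx,
\end{align*}
so $\frac{\delta \mathcal{U}[\rho_t]}{\delta \rho_t}(x) = U(x)$, up to an additive constant that disappears after taking the gradient. Crucially, this quantity does not depend on $\rho_t$. Therefore the population-level force $-\nabla \frac{\delta \mathcal{U}}{\delta \rho_t}$ reduces to the fixed external force field $-\nabla U(x)$.

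Next we substitute this, together with $\gamma = 0$, into \eqref{eq:equations_of_motion}. This gives $\dot x_t = v_t$ and $\dot v_t = -\nabla U(x_t)$, that is, $\ddot x_t = -\nabla U(x_t)$, which is Newton's second law for a unit-mass particle in the potential $U$. Because the force no longer involves $\rho_t$, the particles are decoupled: each one follows an independent Newtonian trajectory starting from $x_0 \sim p_0$ with $v_0 = \nabla s_0(x_0)$. The population $p_t$ is then simply the pushforward of $p_0$ under this flow. For completeness, we can also record the Eulerian form. With $\gamma = 0$, \eqref{eq:potential_evolution} becomes $\dot s_t + \frac12\|\nabla s_t\|^2 + U = 0$, which is the classical Hamilton--Jacobi equation for $H(x,p) = \frac12\|p\|^2 + U(x)$. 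This confirms that $s_t$ plays the role of Hamilton's principal function.

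As a consistency check, we can verify the Lagrangian as well. Setting $\gamma = 0$ in \eqref{eq:damped_lagrangian} yields $\mathcal{L} = \int \big(\frac12\|\nabla s_t\|^2 - U\big)\rho_t\,dx$, which is the $\rho_t$-average of the classical Lagrangian $\frac12\|\dot x\|^2 - U(x)$ evaluated along the velocity field. The Wasserstein action is thus the expected classical action.

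The only delicate point is the interpretive step: making precise what it means for WLM to ``describe Newtonian mechanics,'' given that \cref{thm:hamiltonian_eqs_wlm} fixes the initial velocity as a gradient field $\nabla s_0$. We will state the claim in that sense: trajectories generated by WLM are exactly Newtonian trajectories with gradient initial velocities. A single particle can be recovered by formally taking $p_0 = \delta_{x_0}$ with any initial velocity. No analytic difficulty arises beyond this, since the functional derivative computation is immediate.
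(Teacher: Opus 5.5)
Your proposal is correct and follows essentially the same route as the paper: the paper observes that $\mathcal{U}[\rho_t]=\int U\rho_t\,dx$ gives $\frac{\delta \mathcal{U}[\rho_t]}{\delta \rho_t}=U$, and substitutes this with $\gamma=0$ into the equations of motion \eqref{eq:equations_of_motion} to recover $\ddot{x}_t=-\nabla U(x_t)$. Your extra remarks are not in the paper but are accurate and make the informal statement more precise: the Hamilton--Jacobi form of \eqref{eq:potential_evolution}, the reading of the Lagrangian as an expected classical action, and the caveat that WLM initial velocities are gradient fields.
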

\vspace{-1em}
\begin{proof}
Under Newtonian mechanics, a particle's acceleration follows the gradient of a state-space potential energy, $\nabla U(x)$\citep{arnold1989mathematical}. We thus recover Newtonian mechanics from the Hamiltonian equations \eqref{eq:equations_of_motion} if $\gamma =0$ and the Wasserstein potential energy is an expectation. Indeed, $\mathcal{U}[\rho_t] = \int U(x)\rho_t(x)dx \implies \frac{\delta \mathcal{U}[\rho_t]}{\delta \rho_t}(x)=U(x)$.
\end{proof}
\vspace{-1em}
 
In other words, if $\mathcal{U}[\rho_t]$ acts linearly on the density via the state-space potential $U(x)$, then individual masses would evolve independently of the rest of the population. These functionals describe classical mechanics, including gravitational motion, harmonic oscillation, and rigid body rotation. 

By adding a nonlinear Fisher information term to the potential energy functional, WLM also describes the Schr\"odinger equation, which governs the wave function in quantum mechanics (see \citet[Example 3]{chow2020wasserstein} and \citet[Section B.3]{neklyudov2023computational}).

We now show that Wasserstein gradient flows admit two characterizations under WLM: a stable characterization in the limit of overdamped friction, and an unstable characterization as a conservative system, which requires a specific initial velocity.

\begin{prop}[WLM describes gradient flows] Let $(\rho_t)_{t \ge 0}$ minimize a free energy functional $\mathcal{F}[\rho]$. Then the dynamics $(\rho_t)_{t \ge 0}$ can be described by WLM as
\begin{enumerate}[noitemsep,nosep,leftmargin=*]
    \item The overdamped limit ($\gamma \to \infty$) of Wasserstein Lagrangian systems with $\mathcal{U}[\rho_t]=\mathcal{F}[\rho_t]$.
    \item A conservative system ($\gamma = 0$) with `inverted' potential energy $\mathcal{U}[\rho_t] = -\frac 12 \int  \|\nabla \frac{\delta}{\delta \rho_t} \mathcal{F}[\rho_t] \|^2 \rho_t dx$ and initial velocity field $v_0 \propto -\nabla \frac{\delta}{\delta \rho_0} \mathcal{F}[\rho_0]$.
\end{enumerate}
\label{prop:GF_as_WLM}
\end{prop}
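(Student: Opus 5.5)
Both parts rest on \cref{thm:hamiltonian_eqs_wlm}, specifically the evolution law \eqref{eq:potential_evolution} for the driving potential $s_t$ and the particle form \eqref{eq:equations_of_motion}. The gradient flow of $\mathcal{F}$ is the continuity equation \eqref{eq: continuity_eq} with $\nabla s_t = -\nabla \frac{\delta \mathcal{F}[\rho_t]}{\delta \rho_t}$. So in each case the task is to show that the WLM velocity field coincides with this one, either asymptotically or exactly.

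\textbf{Part 1 (overdamped limit).} I would rescale time as $\tau = t/\gamma$; this is the standard Smoluchowski--Kramers scaling, in which the effective mobility is $1/\gamma$. The particle equation \eqref{eq:equations_of_motion} reads $\ddot x_t = -\nabla \frac{\delta \mathcal{U}}{\delta \rho_t}(x_t) - \gamma \dot x_t$. Setting $y_\tau = x_{\gamma\tau}$, this becomes
\[
\gamma^{-2}\, y''_\tau = -\nabla \tfrac{\delta \mathcal{U}}{\delta \rho}(y_\tau) - y'_\tau .
\]
Formally letting $\gamma\to\infty$ leaves $y'_\tau = -\nabla \frac{\delta \mathcal{F}}{\delta \rho}(y_\tau)$, which is exactly the Wasserstein gradient flow. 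Equivalently, at the level of \eqref{eq:potential_evolution}, I would write $s_t = \gamma^{-1}\tilde s$. The dominant balance at order one then gives $\gamma s_t \approx -\frac{\delta \mathcal{U}}{\delta\rho_t}$, since $\dot s_t$ and $\frac12\|\nabla s_t\|^2$ are lower order in the rescaled time. Hence $\nabla s_t \approx -\gamma^{-1}\nabla \frac{\delta\mathcal{F}}{\delta\rho_t}$.

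The main obstacle is making this singular-perturbation limit rigorous. It requires controlling the initial layer in which $v_0$ relaxes to the slow manifold, and showing that the inertial term $\gamma^{-2}y''$ vanishes uniformly. I would present the statement as a formal limit and keep only an $O(\gamma^{-1})$ consistency argument.

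\textbf{Part 2 (conservative inverted potential).} Take $\gamma=0$ and $\mathcal{U}[\rho] = -\frac12\int\|\nabla \phi_\rho\|^2\rho\,dx$, where $\phi_\rho := \frac{\delta\mathcal{F}}{\delta\rho}$. Impose the ansatz $s_t = -\phi_{\rho_t}$ and verify that it satisfies \eqref{eq:potential_evolution}. This needs
\[
-\partial_t \phi_{\rho_t} = -\tfrac12\|\nabla\phi_{\rho_t}\|^2 - \tfrac{\delta\mathcal{U}}{\delta\rho_t}.
\]
The left side is computed by the chain rule along the gradient flow $\dot\rho_t = \nabla\cdot(\rho_t\nabla\phi_{\rho_t})$. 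This gives $\partial_t\phi_{\rho_t} = \int \frac{\delta^2\mathcal{F}}{\delta\rho^2}(\cdot,y)\,\nabla\cdot(\rho_t\nabla\phi)(y)\,dy$.

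For the right side I would compute $\frac{\delta \mathcal{U}}{\delta\rho}$ by perturbing $\rho\to\rho+\epsilon h$. The explicit $\rho$ contributes $-\frac12\|\nabla\phi\|^2$. The dependence of $\phi$ on $\rho$ contributes $-\int \rho\nabla\phi\cdot\nabla\big(\int\frac{\delta^2\mathcal{F}}{\delta\rho^2}h\big)$. Integrating by parts turns the second contribution into $\int h(x)\int\frac{\delta^2\mathcal F}{\delta\rho^2}(x,y)\nabla\cdot(\rho\nabla\phi)(y)\,dy\,dx$, using the symmetry of the second variation. Thus
\[
\frac{\delta\mathcal{U}}{\delta\rho} = -\tfrac12\|\nabla\phi\|^2 + \partial_t\phi .
\]
Substituting, the right side becomes $-\frac12\|\nabla\phi\|^2 + \frac12\|\nabla\phi\|^2 - \partial_t\phi = -\partial_t\phi$, which matches the left side.

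Since the ansatz solves the Hamiltonian system, the conclusion follows from uniqueness of that system given the initial data $\rho_0$ and $s_0=-\phi_{\rho_0}$, i.e.\ $v_0=-\nabla\frac{\delta\mathcal F}{\delta\rho_0}$. The delicate points are regularity of the second variation and the justification of the integration by parts. I would assume enough smoothness of $\mathcal{F}$ for both.
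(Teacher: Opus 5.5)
Your Part 1 is the same formal singular-perturbation argument as the paper's. The paper applies Tikhonov's theorem to \eqref{eq:equations_of_motion} with $\mathcal{U}=\mathcal{F}$ and then the superposition principle. Your remark about the initial layer is, if anything, more careful than the paper's treatment.

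Part 2 is correct, but you reach it by a different route. The paper argues variationally. Expanding $\mathcal{J}[\rho_t,s_t]=\frac12\int_0^1\!\int\|\nabla s_t+\nabla\frac{\delta\mathcal{F}}{\delta\rho_t}\|^2\rho_t\,dx\,dt$ and using the continuity equation gives $\mathcal{J}=\mathcal{S}+\mathcal{F}[\rho_1]-\mathcal{F}[\rho_0]$, where $\mathcal{S}$ is the action for the inverted potential. Since $\mathcal{J}\ge 0$ and $\mathcal{J}=0$ on the gradient flow, the gradient flow is a global minimizer of $\mathcal{S}$ among curves with its endpoints. The role of $v_0$ is handled separately in \cref{prop:WGF_as_WLM}. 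There the paper shows that the discrepancy $g_t=v_t+\nabla\frac{\delta\mathcal{F}}{\delta\rho_t}$ obeys a homogeneous linear evolution along any WLM trajectory.

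You bypass the action and check the Eulerian equation \eqref{eq:potential_evolution} directly for the ansatz $s_t=-\frac{\delta\mathcal{F}}{\delta\rho_t}$. Your two ingredients, the second-variation formula for $\frac{\delta\mathcal{U}}{\delta\rho}$ and the chain rule for $\partial_t\frac{\delta\mathcal{F}}{\delta\rho_t}$, are exactly those of the paper's $g_t$ computation. Your check is its $g\equiv 0$ case.

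What your route buys is brevity and a direct initial-value characterization, which is what \texttt{WLM} actually simulates. It also meets the paper's own standard for ``described by WLM'': the classical-mechanics proposition likewise only checks the equations of motion. What it does not give is the paper's stronger fact that the gradient flow is a genuine least-action curve, not merely a critical point. Both versions end with the same appeal to uniqueness of an unstable evolution, so neither is more rigorous on that point.

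Finally, fixing the proportionality constant to exactly $1$ is forced, not a convenience. Redo your check with $s_t=-\lambda\frac{\delta\mathcal{F}}{\delta\rho_t}$ for a rate-$\lambda$ gradient flow. It closes only if $\mathcal{U}$ is replaced by $\lambda^2\mathcal{U}$. So with the stated inverted potential, the ``$\propto$'' must be read as an equality. The paper's remark that other values of $\lambda$ go through in the same way overlooks this.
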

\vspace{-1em}
\begin{proof}

To recover the interpretation of a gradient flow as the overdamped limit of Lagrangian systems (see \citet[Chapter 7]{villani2009optimal} and \citet{adams2013large}), but generalized to the Wasserstein space, we first recall that given the canonical coordinates, a gradient flow of $\mathcal{F}$ has vector field proportional to the Wasserstein gradient of the free energy. Formally, $\nabla s_t \propto -\nabla \frac{\delta}{\delta \rho_t}\mathcal{F}[\rho_t]$ holds $\rho_t$-a.e. 
\citep[(4.10)]{santambrogio2017euclidean}. Then, we consider the Hamiltonian equations \eqref{eq:equations_of_motion} with $\mathcal{U}[\rho_t]=\mathcal{F}[\rho_t]$ and take $\gamma \to \infty$. We may apply Tikhonov's theorem \citep{tikhonov1952systems} to obtain
\begin{align*}
    \frac{dx}{dt} \propto -\nabla \frac{\delta\mathcal{F}[\rho_t]}{\delta \rho_t}(x) \implies \nabla s_t \propto -\nabla \frac{\delta \mathcal{F}[\rho_t]}{\delta \rho_t}.
\end{align*}
The implication is from the principle of superposition \citep[Theorem 8.2.1]{ambrosio2008gradient}. It follows that the overdamped characterization produces gradient flow dynamics for $t > 0$, no matter what $v_0$ is initialized as.

Then, to prove the second description, we note that since a gradient flow has vector field $\nabla s_t \propto -\nabla \frac{\delta}{\delta \rho_t}\mathcal{F}[\rho_t]$, it follows that, between any marginals $p_0$ and $p_1$, the population dynamics minimize
\begin{align*}
    \mathcal{J}[\rho_t, s_t] = \frac12\int_0^1 \int \| \nabla s_t + \nabla \frac{\delta}{\delta \rho_t} \mathcal{F}[\rho_t]\|^2 \rho_t \ dx \  dt,
\end{align*}
We then simplify and obtain the equivalent minimization
\begin{align*}
    \mathcal{S}[\rho_t, s_t]=\int_0^1 \frac12 \int \| \nabla s_t\|^2 \rho_t  + \| \nabla \frac{\delta}{\delta \rho_t} \mathcal{F}[\rho_t]\|^2 \rho_t \ dx dt,
\end{align*}
which is the Lagrangian action with potential energy $\mathcal{U}[\rho_t] = -\frac 12 \int  \|\nabla \frac{\delta}{\delta \rho_t} \mathcal{F}[\rho_t] \|^2 \rho_t dx$. However, these dynamics only determine a gradient flow if we also have $\nabla s_0  \propto -\nabla \frac{\delta}{\delta \rho_0} \mathcal{F}[\rho_0]$. Otherwise, the gradient flow condition $\nabla s_t \propto -\nabla \frac{\delta}{\delta \rho_t}\mathcal{F}[\rho_t]$ fails to hold for $t>0$ (see \cref{prop:WGF_as_WLM}).
\end{proof}

\begin{figure}[t]
    \centering
    \begin{subfigure}[t]{0.48\linewidth}
        \includegraphics[width=\linewidth]{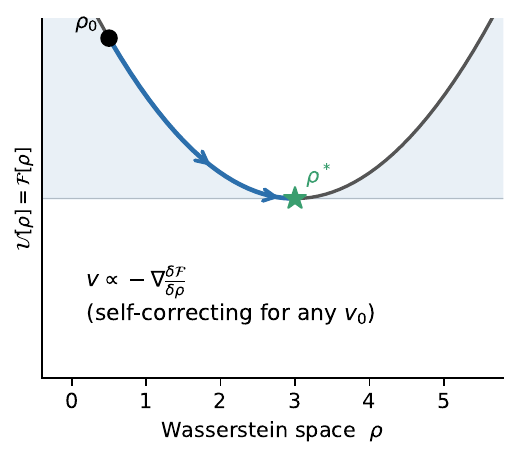}
        \caption{Stable overdamped system ($\gamma \to \infty$) with $\mathcal{U}[\rho] = \mathcal{F}[\rho]$}
    \end{subfigure}
    \hfill
    \begin{subfigure}[t]{0.48\linewidth}
        \includegraphics[width=\linewidth]{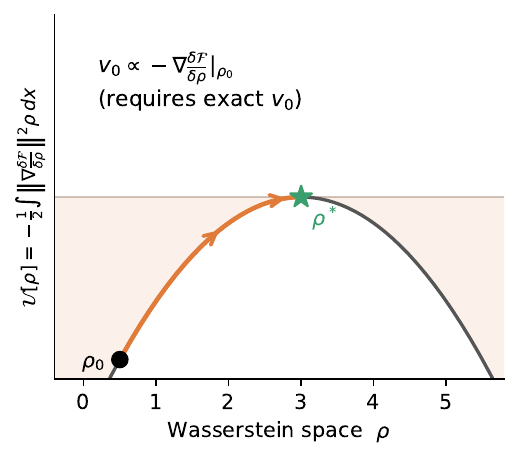}
        \caption{Unstable conservative system with inverted potential}
    \end{subfigure}
    \caption{We visualize Proposition~\ref{prop:GF_as_WLM}, which shows that Wasserstein gradient flows admit two characterizations under WLM.}
    \label{fig:prop_gf_as_wlm}
\end{figure}

We offer a visual interpretation of \cref{prop:GF_as_WLM} in \cref{fig:prop_gf_as_wlm}. Intuitively, while gradient flows are an expressive class of dynamics, popularly used for modeling population dynamics, \cref{prop:GF_as_WLM} shows that they only represent an extremely specific subset of Wasserstein Lagrangian Mechanics. Indeed, the free energy $\mathcal{F}[\rho]$ and the potential energy $\mathcal{U}[\rho]$ can each be arbitrary functionals, but gradient flows produce first-order mechanics along the free energy landscape determined by $\mathcal{F}[\rho]$, while WLM produces second-order mechanics determined by $\mathcal{U}[\rho]$ and $\gamma$. Given equality $\mathcal{U}[\rho]=\mathcal{F}[\rho]$, the WLM system only degenerates into a gradient flow in the overdamped limit. Similarly, \cref{prop:GF_as_WLM} shows that gradient flows can also be represented as highly unstable conservative systems, but only if the initial velocity obeys $v_0 \propto \nabla s_0$. Thus, any deviation in $v_0$ still characterizes a WLM system, but will fail to be a gradient flow of $\mathcal{F}$.

\section{Learning Population Mechanics}
\label{sec:learning_wlm}
Having demonstrated WLM's expressivity, we now turn to the task of inferring these mechanics from data. We note that if the Lagrangian $\mathcal{L}[\rho_t, s_t, t]$ is known, then the interpolants between observed marginals can be estimated by minimizing the action functional \eqref{eq: minimize_lagrangian_action} with respect to $\mathcal{L}[\rho_t, s_t, t]$\citep{neklyudov2023computational, qian2024conditional, sun2025variational}. However, it is not at all obvious how to pick the correct Lagrangian, since this amounts to choosing a reference process. We therefore consider the Hamiltonian perspective, and seek to learn population mechanics that produce the observed marginals. 

\vspace{-1em}
\paragraph{Mechanics Model.}

We propose a neural mechanics model to learn the population mechanics from a sequence of empirical marginals $\{\hat{p}_{t_i}\}_{i=1}^{M}$ and the initial velocities $\hat{v}_0$. The intuition behind our algorithm comes from \cref{thm:hamiltonian_eqs_wlm}: given the initial conditions, the population dynamics only depend on the potential energy $\mathcal{U}[\rho_t]$ and the damping $\gamma$. We therefore parameterize these variables and estimate them from the observed marginals and the initial velocity $v_0$. The learned mechanical system can then be simulated from any time, and can thus be used for interpolation or forecasting.

First, recall that the acceleration of a particle $x_t^{(j)} \sim p_t$ is given by $\frac{dv_t^{(j)}}{dt} = -\nabla_x \frac{\delta \mathcal{U}[p_t]}{\delta p_t}(x_t^{(j)}) - \gamma v_t^{(j)}$. While the functional derivative is generally intractable, \cref{prop:tractable_accel} (proof in Appendix) shows that it can be bypassed with a tractable characterization.
\begin{restatable}{prop_rep}{TractableAccel}
\label{prop:tractable_accel}
Consider the empirical measure $\hat{p} = \frac{1}{N}\sum_{i=1}^N \delta_{x^{(i)}}$ and let $\Psi(x^{(1)}, \ldots, x^{(N)}) := \mathcal{U}[\hat{p}]$ be its potential energy. Then, for any particle $x^{(j)} \sim \hat{p}$, we have
\begin{align}
    \nabla_{x^{(j)}} \Psi(x^{(1)}, \ldots, x^{(N)}) = \frac{1}{N} \nabla_x \frac{\delta \mathcal{U}[p]}{\delta p }(x^{(j)})\bigg|_{p = \hat{p}}
\end{align}
\end{restatable}

\vspace{-1em}
We may therefore parameterize $\Psi_\theta: \R^{N \times d} \to \R$ with a deep neural network and obtain accelerations from the model via automatic differentiation:
\begin{align}
    a(x_t^{(j)}) = -\nabla_{x_t^{(j)}} \Psi_\theta(x_t^{(1)}, \ldots, x_t^{(N)}) - \gamma v_t^{(j)},
    \label{eq:state_space_autograd}
\end{align}
where we absorb the factor $N$ into the parameterization.

\begin{algorithm}[!t]
\small
\caption{Learning Population Mechanics}
\label{alg:wlm_learning}
\begin{algorithmic}[1]
\State \textbf{Input:} Observed snapshots $\{X_{t_i} = \{x_{t_i}^{(j)}\}_{j=1}^N\}_{i=0}^{M}$, initial velocities $v_0 = \{v_0^{(j)}\}_{j=1}^N$
\State \textbf{Hyperparams:} Divergence $\mathcal{D}$, damping $\gamma \geq 0$, step size $\Delta t$
\State \textbf{Initialize:} Energy model $\Psi_\theta: \mathbb{R}^{N \times d} \to \mathbb{R}$
\For{$\text{iter} = 1$ \textbf{to} max-iterations}
    \State Sample rollout horizon $K \sim \texttt{Unif}[1, M]$
    \State $\{\hat{X}_{t_i}\}_{i=1}^{K} \gets \texttt{P-Mechanics}(\Psi_\theta, \gamma, X_{t_0}, v_0, \Delta t, K)$
    \State $\mathcal{L} \gets \frac{1}{k}\sum_{i=1}^{K} \mathcal{D}(\hat{X}_{t_i}, X_{t_i})$ 
    \State $\theta \gets \texttt{Optimizer}(\theta, \nabla_\theta \mathcal{L})$
    \State \textbf{if} $\gamma$ learnable \textbf{then} $\gamma \gets \texttt{Optimizer}(\gamma, \nabla_\gamma \mathcal{L})$
\EndFor
\State \textbf{Return:} Optimized $\Psi_\theta$ and $\gamma$
\end{algorithmic}

\hrule 
\vspace{0.4em}

\refstepcounter{algorithm}
\label{alg:wlm_sampling}
\begin{algorithmic}[1]
\State \textbf{Input:} $\Psi_\theta$, $\gamma$, $\{x_0^{(j)}\}_{j=1}^N$, $\{v_0^{(j)}\}_{j=1}^N$, $\Delta t$, $K$
\For{$k = 0$ \textbf{to} $K-1$}
    \State $a_k^{(j)} \gets -\nabla_{x_k^{(j)}} \Psi_\theta(x_k^{(1)}, \ldots, x_k^{(N)})$ for all $j$ 
    \State {\footnotesize $x_{k+1}^{(j)}\!,\!v_{k+1}^{(j)} \!\gets\! \texttt{Leapfrog}(x_k^{(j)}\!,\!v_k^{(j)}\!,\!a_k^{(j)}\!,\!\gamma\!,\!\Delta t)$} for all $j$
\EndFor
\State \textbf{Return:} $\{\{x_k^{(j)}\}_{j=1}^N\}_{k=0}^K$
\end{algorithmic}
\end{algorithm}

\vspace{-1em}
\paragraph{Training Algorithm.} 
To train our model, we access accelerations via \eqref{eq:state_space_autograd}, and roll out predicted dynamics from the initial conditions $(\hat{p}_0, \hat{v}_0)$ via the second-order Verlet integrator (Leapfrog), due to its stability for Hamiltonian systems \citep{birdsall2018plasma}. We then set the loss to be the divergence, e.g. Sinkhorn divergence, between the observed marginals, $\{\hat{p}_{t_i}\}_{i=1}^k$, and the model's simulated population. To update the model, we evaluate its gradient with respect to parameters $\theta$ by backpropagating directly through the time-discretized trajectories, i.e. the so-called Discretize-Then-Optimize approach. The pseudo-code for training the model and for simulating the mechanics are given in \cref{alg:wlm_learning} and \cref{alg:wlm_sampling} respectively. We note that temporal features can be added to model time-varying potential energy $\mathcal{U}[\rho_t, t]$, and that the damping parameter can either be fixed, e.g. $\gamma = 0$, or initialized and learned.

\section{Experiments}
\label{sec:experiments}
In \cref{sec:expressivity_wlm}, we proved that WLM describes classical mechanics, quantum mechanics, and gradient flows. We first evaluate on a benchmark dataset of gradient-flow SDEs to demonstrate that \texttt{WLM} can indeed learn any dynamics modeled by gradient flows. We then consider popular real data settings from the literature to evaluate \texttt{WLM}'s performance on non-gradient dynamics. Specifically we consider an ocean vortex dataset \citep{shen2024multi,petrovic2025curly,berlinghieri2025oh} and an embryonic scRNA dataset \citep{tong2020trajectorynet,tong2023simulation,neklyudov2023action,kapusniak2024metric}. Finally, we introduce a novel Boids \citep{reynolds1987flocks} dataset to evaluate inference on emergent dynamics from a population of interacting agents. 

\vspace{-1em}
\paragraph{Experimental Setup.}
We evaluate the ability of methods to \textit{forecast} and \textit{interpolate} by computing the $W_1$ distance between true and predicted marginals.
To evaluate \textit{forecasting} quality, we simulate the learned dynamics from $t=0$, and distinguish the results between seen (train) and unseen (test) marginals. To evaluate \textit{interpolation} quality for an unseen marginal $p_h$, we simulate from the previous observed time $h-1$, as done in \citet{neklyudov2023computational}. This ensures a fairer comparison against flow and SB based models, which are constrained to match the training marginals. To simulate methods that fit an SDE, we use the Euler-Maruyama scheme, and for \texttt{WLM}, we use the leapfrog scheme, as shown in \cref{alg:wlm_sampling}. At inference time, we use $5$ substeps between marginals. We describe the architecture, hyperparameters, and runtimes of \texttt{WLM} in \cref{sec:additional_experiment_appendix}. The code repository is available on GitHub: \url{https://github.com/guanton/WLM}.

\subsection{Gradient-flow Dynamics: Potential-driven SDEs}

To show that \texttt{WLM} can learn gradient flow dynamics, we obtain SDEs such that the drift is a potential landscape $-\nabla V$ from a benchmark dataset \citep{terpin2024learning, persiianov2025learning, guangradient} and we set the diffusivity to be $\sigma^2=1$. For each SDE (the exact potentials are defined in \cref{sec:gradient_flows_appendix}), we simulate $10$ evenly spaced marginals with $1000$ samples each (training data), and reserve the next $10$ marginals for forecasting (test data). We also distinguish between ``paired'' and ``unpaired'' data settings. In the paired setting, we evolve the same $1000$ particles across all times, while in the unpaired setting, we obtain independent samples for each marginal. 

Recall that by \cref{prop:GF_as_WLM}, gradient flows can be characterized as  overdamped systems or as conservative systems with precise initial velocity. We implement both versions for \texttt{WLM}. For the former, we provide the analytic velocity field from the free energy, $v_0(x) = -\nabla(V(x) + \frac{1}{2}\log p_0(x))$ and fix $\gamma = 0$. For the latter, we set $v_0 = 0$ and make friction learnable (initial $\gamma = 1$). 

We consider \textsc{JKOnet}$\ast$ \citep{terpin2024learning}
and \textsc{nn-APPEX} \citep{guangradient} as our gradient flow baselines, due to their performance and ability to 
estimate both the drift $-\nabla V$ and the diffusion $\sigma^2$ from marginals \citep{guangradient}. 

We show results in \cref{tab:sde_results_agg}. Every method succeeds at forecasting in the paired data setting, and \texttt{WLM} (with learnable friction) achieves the best performance in the significantly harder unpaired setting. As observed in \citet[A.3]{persiianov2025learning}, \textsc{JKOnet}$^\ast$ significantly struggles in the unpaired setting due to its reliance on pre-computed couplings. The results therefore show that by making friction learnable, \texttt{WLM} learns gradient flow dynamics as effectively as existing gradient-flow methods. In fact, these results suggest that the learned dissipation by \texttt{WLM} can be used to determine how close population dynamics are to being a gradient flow. We consistently recover $\gamma \ge 500$ in most runs of this experiment (see \cref{fig:friction_comparison_gf_sde} in \cref{sec:gradient_flows_appendix}).

\begin{table} 
\centering
\caption{Gradient flow SDEs: We report the average $W_1$ distances between true marginals and learned marginals rolled out from \textsc{JKOnet}*, \textsc{nn-APPEX}, and WLM. We consider both paired and unpaired settings, and we distinguish between train (first $10$ marginals) and forecast (next $10$ marginals). Results are also averaged across all 5 SDEs. The full breakdown per SDE is in \cref{tab:full_gf_sde_results}.}
\label{tab:sde_results_agg}
\resizebox{\columnwidth}{!}{%
\begin{tabular}{lcccc}
\toprule
 & \multicolumn{2}{c}{\textbf{Paired}} & \multicolumn{2}{c}{\textbf{Unpaired}} \\
\cmidrule(lr){2-3} \cmidrule(lr){4-5}
\textbf{Method} & Train $W_1$  & Forecast $W_1$ & Train $W_1$ & Forecast $W_1$  \\
\midrule
\textsc{JKOnet}* & $0.085 \pm 0.007$ & $0.193 \pm 0.020$ & $0.236 \pm 0.040$ & $1.618 \pm 0.261$ \\
\textsc{nn-APPEX}   & $0.080 \pm 0.006$ & \textbf{0.131} $\pm$ \textbf{0.006} & $0.102 \pm 0.008$ & $0.260 \pm 0.025$ \\
\texttt{WLM} (learnable friction)               & \textbf{0.062} $\pm$ \textbf{0.004} & $0.137 \pm 0.012$ & \textbf{0.068} $\pm$ \textbf{0.004} & \textbf{0.246} $\pm$ \textbf{0.026} \\
\texttt{WLM} (0 friction)               & $0.119 \pm 0.012$ & $0.255 \pm 0.037$ & $0.119 \pm 0.011$ & $0.346 \pm 0.045$ \\
\bottomrule
\end{tabular}%
}
\end{table}

\subsection{Curling Dynamics: Ocean vortex data}
We test \texttt{WLM}'s ability to learn the curling dynamics of particles within an ocean vortex, whose velocity field was measured in the Gulf of Mexico. We consider the interpolation experiment from previous works \citep{shen2024multi, petrovic2025curly}: given that $111$ particles spiral out from the eye of the vortex, and are observed at $5$ uniformly spaced times (train on $t_1, t_3, t_5, t_7,t_9)$, the task is to interpolate $4$ intermediary marginals (test on $t_2, t_4, t_6, t_8$). We also consider the same forecasting experiment from \citet{berlinghieri2025oh}, which considers a larger vortex from the same dataset. There, the task is to predict the last marginal, after training on $10$ previous marginals of $400$ samples each. For the interpolation task, we add time features to model a time-varying potential energy $\mathcal{U}[\rho_t,t]$. To perform the forecasting task, we do not consider time features. For more implementation details, see \cref{sec:ocean_currents_appendix}.

We report results for interpolation and forecasting in \cref{tab:oceans_metrics}, noting that only \texttt{WLM}, \textsc{SBIRR}, and \textsc{Snap-MMD} are capable of forecasting. \texttt{WLM} achieves the best interpolations among methods that do not use a reference process, and the second best overall performance after \textsc{Curly-FM}, which trains a flow matching method based on a `curly' reference process constructed from velocity fields at all training marginals. In contrast, \texttt{WLM} learns a mechanics model directly from the observed marginals and the initial velocity. We visualize \texttt{WLM}'s ability to predict unseen marginals along the vortex's curvature in \cref{fig:oceans}. Conversely, gradient-flow and OT-based methods struggle to interpolate along the vortex, with the notable exception of \textsc{OT-MFM}, which leverages the data manifold structure. We also emphasize that by learning the mechanics, \texttt{WLM} is able to forecast, whereas flow matching methods like \textsc{Curly-FM} can only interpolate. While \textsc{SBIRR} \citep{shen2024multi} and \textsc{Snap-MMD} \citep{berlinghieri2025oh} produce reasonable forecasts by using reference SDEs based on vortex dynamics \citep[D.5.2]{shen2024multi} \citep[D.9.2]{berlinghieri2025oh}, \texttt{WLM} produces more accurate forecasts by directly learning second-order mechanics. 

\begin{table*}[!t]
\centering
\caption{$W_1$ distance (averaged over 3 seeds) for interpolating and forecasting on the Gulf of Mexico dataset. We underline best results for methods that use a reference process from domain knowledge and bold best results for methods that do not.}
\label{tab:oceans_metrics}
\footnotesize
\renewcommand{\arraystretch}{0.85}
\setlength{\tabcolsep}{3pt}

\medmuskip=0mu

\begin{tabular}{lcccccc}
\toprule
& \multicolumn{4}{c}{Interpolation (small vortex)} & \multicolumn{2}{c}{Forecast (big vortex)} \\
\cmidrule(lr){2-5} \cmidrule(lr){6-7}
Method & $t_2$ & $t_4$ & $t_6$ & $t_8$ & $t_{11}$ (from $t_0$) & $t_{11}$ (from $t_{10}$) \\
\midrule
\multicolumn{7}{l}{\textbf{Uses reference process from domain knowledge}} \\
\textsc{Curly-FM}* {\scriptsize \citep{petrovic2025curly}} & $\underline{0.019 \pm 0.003}$ & $\underline{0.045 \pm 0.005}$ & $\underline{0.027 \pm 0.001}$ & $\underline{0.030 \pm 0.006}$ & -- & -- \\
\textsc{SBIRR} {\scriptsize \citep{shen2024multi}} & $0.073 \pm 0.035$ & $0.087 \pm 0.033$ & $0.062 \pm 0.010$ & $0.082 \pm 0.022$ & $1.062 \pm 0.111$ & $0.567 \pm 0.014$ \\
\textsc{Snap-MMD} {\scriptsize \citep{berlinghieri2025oh}} & $0.051 \pm 0.003$ & $0.067 \pm 0.002$ & $0.131 \pm 0.005$ & $0.056 \pm 0.001$ & $\underline{0.896 \pm 0.141}$ & $\underline{0.473 \pm 0.013}$ \\
\midrule
\multicolumn{7}{l}{\textbf{No reference process from domain knowledge}} \\
\textsc{AM} {\scriptsize \citep{neklyudov2023action}} & $0.358 \pm 0.009$ & $0.353 \pm 0.020$ & $0.447 \pm 0.013$ & $0.346 \pm 0.016$ & -- & -- \\
\textsc{UAM} {\scriptsize \citep{neklyudov2023action}} & $0.291 \pm 0.008$ & $0.378 \pm 0.021$ & $0.514 \pm 0.010$ & $0.349 \pm 0.009$ & -- & -- \\
\textsc{sAM} {\scriptsize \citep{neklyudov2023action}} & $0.358 \pm 0.009$ & $0.354 \pm 0.010$ & $0.451 \pm 0.016$ & $0.350 \pm 0.009$ & -- & -- \\
\textsc{DICE} {\scriptsize \citep{blickhan2025dice}} & $0.369 \pm 0.019$ & $0.252 \pm 0.020$ & $0.240 \pm 0.011$ & $0.115 \pm 0.009$ & -- & -- \\
\textsc{Vanilla-SB} {\scriptsize \citep{shen2024multi}} & $0.112 \pm 0.061$ & $0.178 \pm 0.073$ & $0.262 \pm 0.043$ & $0.305 \pm 0.032$ & -- & -- \\
\textsc{OT-CFM}* {\scriptsize \citep{tong2023improving}} & $0.148 \pm 0.004$ & $0.227 \pm 0.008$ & $0.191 \pm 0.012$ & $0.250 \pm 0.018$ & -- & -- \\
\textsc{OT-MFM}* {\scriptsize \citep{kapusniak2024metric}} & $0.107 \pm 0.014$ & $0.056 \pm 0.014$ & $\mathbf{0.052 \pm 0.011}$ & $0.070 \pm 0.021$ & -- & -- \\
\texttt{WLM} (Ours) & $\mathbf{0.039 \pm 0.002}$ & $\mathbf{0.028 \pm 0.000}$ & $0.083 \pm 0.002$ & $\mathbf{0.050 \pm 0.004}$ & $\mathbf{0.180 \pm 0.038}$ & $\mathbf{0.065 \pm 0.005}$ \\
\bottomrule
\multicolumn{7}{r}{\scriptsize *Numbers taken from \citet{petrovic2025curly}} \\
\end{tabular}
\end{table*}

\newcommand{\orangecross}{{\color{orange}$\times$}}

\begin{figure*}[t]
    \centering
    \begin{subfigure}[b]{0.28\textwidth}
        \centering
        \includegraphics[height=3.8cm]{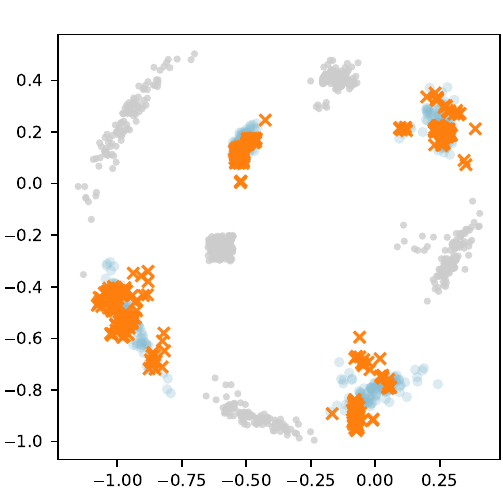}%
        \caption{Ocean vortex}
        \label{fig:oceans}
    \end{subfigure}%
    \hfill
    \begin{subfigure}[b]{0.70\textwidth}
        \centering
        \includegraphics[height=3.8cm]{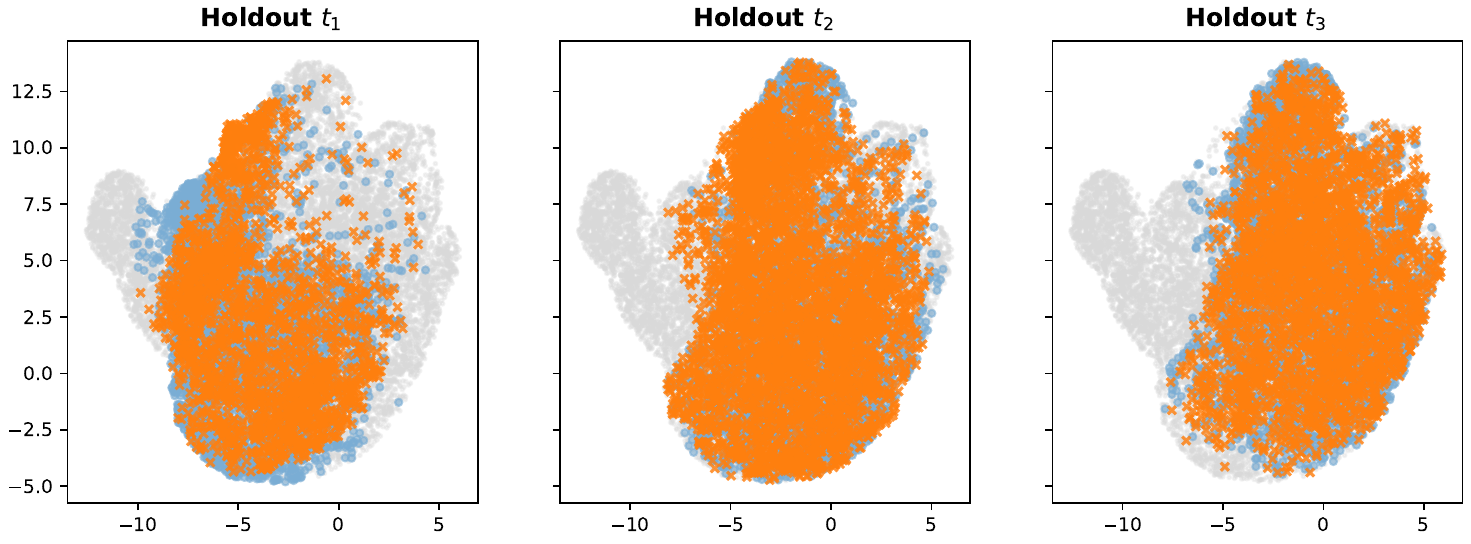}%
        \caption{Embryoid body}
        \label{fig:eb_umap}
    \end{subfigure}
    \caption{\texttt{WLM}'s predictions (\orangecross) for unseen
    interpolants (\bluecircle) are visualized for the
    (a) Ocean vortex (in spatial coordinates) and (b)
    Embryoid body (in UMAP coordinates) datasets. Marginals that are used to train \texttt{WLM}'s mechanics model are plotted in
    \textcolor{gray}{gray}.}
    \label{fig:oceans_vs_eb}
\end{figure*}

\begin{table}[!t]
\centering
\caption{Average $W_1$ distance for leave-one-out interpolation on $5$-dim PCA representation of EB scRNA dataset. Results are averaged over $5$ runs.}
\label{tab:results_eb}
\footnotesize
\renewcommand{\arraystretch}{0.85}
\setlength{\tabcolsep}{2pt}
\begin{tabular}{l r}
\toprule
Method & $W_1$ \\
\midrule
\textsc{Reg. CNF}* \citep{finlay2020train} & $0.825 \pm 0.429$ \\
\textsc{TrajectoryNet}*\citep{tong2020trajectorynet} & $0.848$ \\
\textsc{NLSB}* \citep{koshizuka2022neural} & $0.970$ \\
\textsc{DSBM}* \citep{shi2024diffusion} & $1.775 \pm 0.429$ \\
\textsc{DSB}* \citep{de2021diffusion} & $0.862 \pm 0.023$ \\
\textsc{SF$^2$M-Sink}* \citep{tong2023simulation} & $1.198 \pm 0.342$ \\
\textsc{SF$^2$M-Geo}* \citep{tong2023simulation} & $0.879 \pm 0.148$ \\
\textsc{SF$^2$M-Exact}* \citep{tong2023simulation} & $0.793 \pm 0.066$ \\
\textsc{OT-CFM}* \citep{tong2023improving} & $0.790 \pm 0.068$ \\
\textsc{I-CFM}* \citep{tong2023improving} & $0.872 \pm 0.087$ \\
\textsc{SB-CFM}* \citep{tong2023improving} & $1.221 \pm 0.380$ \\
\textsc{WLF-UOT}* \citep{neklyudov2023computational} & $0.738 \pm 0.014$ \\
\textsc{WLF-SB}* \citep{neklyudov2023computational} & $0.746 \pm 0.016$ \\
\textsc{WLF-OT}* \citep{neklyudov2023computational} & $0.742 \pm 0.012$ \\
\textsc{I-MFM}*\citep{kapusniak2024metric} & $0.822 \pm 0.042$ \\
\textsc{OT-MFM}*\citep{kapusniak2024metric} & $0.713 \pm 0.039$ \\
\textsc{AM} \citep{neklyudov2023action} & $0.944 \pm 0.003$ \\
\textsc{UAM} \citep{neklyudov2023action} & $0.924 \pm 0.004$ \\
\textsc{sAM} \citep{neklyudov2023action} & $0.939 \pm 0.003$ \\
\textsc{DICE} \citep{blickhan2025dice} & $0.826 \pm 0.002$ \\
\texttt{WLM} (Ours) & $\mathbf{0.704 \pm 0.021}$ \\
\bottomrule
\multicolumn{2}{r}{\scriptsize *Numbers taken from \citet{kapusniak2024metric}} \\
\end{tabular}
\end{table}

\begin{figure*}[t]
    \centering
    \begin{tikzpicture}
        \node[anchor=south west, inner sep=0] (image) at (0,0) {
            \includegraphics[width=0.965\linewidth]{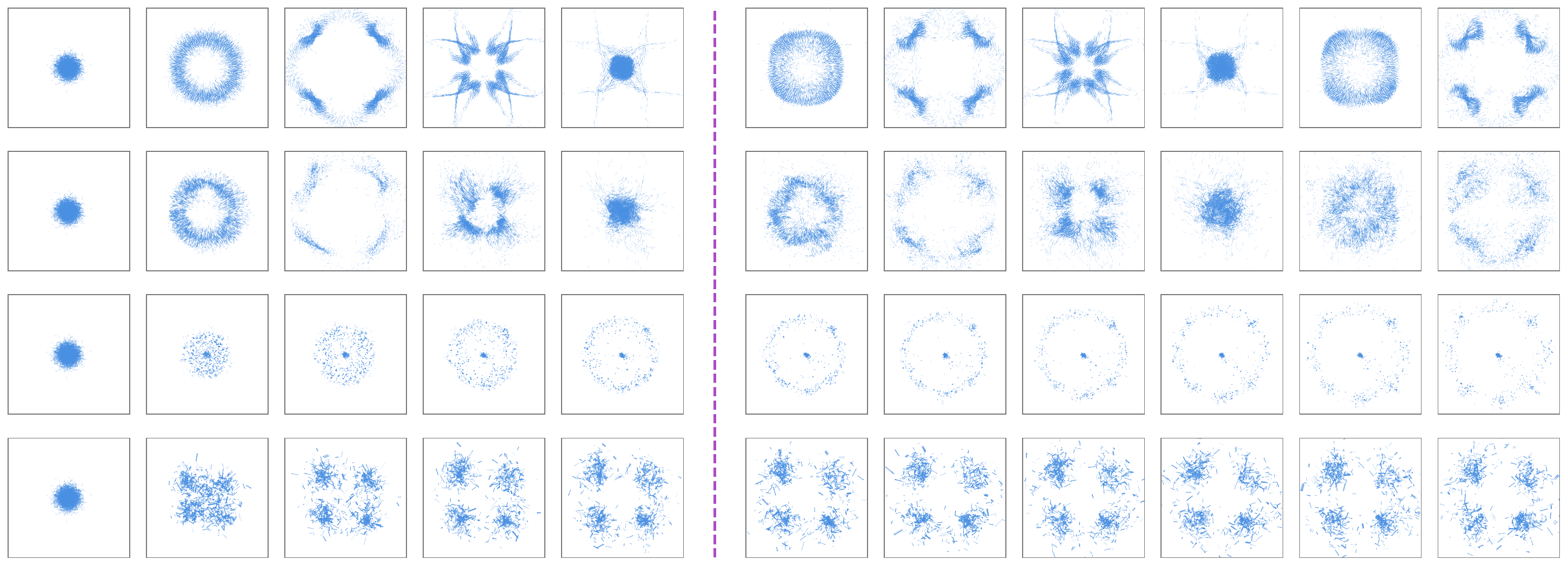} 
        };

        \begin{scope}[x={(image.south east)}, y={(image.north west)}]

            
            \def\rowY{0.02}   
            \def\colY{1.01}   
            \def\labelSep{-0.015} 

            
            \node[rotate=90, anchor=south, font=\tiny] at (\labelSep, 0.875) {Ground Truth};
            
            \node[rotate=90, anchor=south, font=\tiny] at (\labelSep, 0.625) 
                {\scalebox{1.3}{\texttt{WLM}}};
            
            \node[rotate=90, anchor=south, font=\tiny] at (\labelSep, 0.375) 
                {\scalebox{1}{\textsc{JKOnet}$^\ast$}};
            
            \node[rotate=90, anchor=south, font=\tiny] at (\labelSep, 0.125) 
                {\scalebox{1}{\textsc{nn-APPEX}}};

            \foreach \x/\t in {
                0.044/0.0, 
                0.133/5.0, 
                0.221/10.0, 
                0.310/15.0, 
                0.398/20.0
            } {
                \node[anchor=south, font=\tiny] at (\x, \colY) {$t=\t$};
            }

            \foreach \x/\t in {
                0.513/25.0, 
                0.602/30.0, 
                0.690/35.0, 
                0.779/40.0, 
                0.867/45.0, 
                0.956/50.0
            } {
                \node[anchor=south, font=\tiny, color=forecastcolour] 
                    at (\x, \colY) {$t=\t$};
            }

        \end{scope}
    \end{tikzpicture}
    
    \caption{
        \textbf{Learning Boids:} 
        We plot the ground truth Boids (first row) against the population dynamics learned by our method \texttt{WLM} (second row), and the gradient flow baselines \textsc{JKOnet}$^\ast$ (third row) and \textsc{nn-APPEX} (fourth row). All methods were trained on $50$ marginals of size $1000$ within $t \in [0,24.5]$. The rollouts of each method are simulated from time $0$ (with $5000$ samples drawn) until the additional \textcolor{forecastcolour}{forecast horizon} $[25.0, 50.0]$ (indicated by {purple time labels}). We include additional figures of how the model learned by \texttt{WLM} generalizes to Boids dynamics from unseen initial populations in \cref{fig:app_boids_generalization}.
    }
    \label{fig:boids_comparison}
\end{figure*}

\subsection{Cell Dynamics: Embryonic scRNA Data}
Inferring the developmental trajectories of evolving cells is a notoriously challenging problem, due to its biological complexity, as well as inherent challenges in the sc-RNA measurement process, which is necessarily destructive \citep{trapnell2014dynamics}. To test \texttt{WLM}'s ability to learn the evolution of cell populations, we consider the leave-one-out interpolation task for each of the three intermediate time marginals of the embryoid body (EB) dataset \citet{moon2019visualizing}. The EB dataset measures the differentiation of human embryonic stem cells over the course of $27$ days, such that the data is split into $5$ marginals of sizes $2381$, $4163$, $3278$, $3665$ and $3332$. By following the standard preprocessing of whitening the $5$-dimensional PCA data \citep{tong2020trajectorynet}, we compare \texttt{WLM} against a comprehensive list of methods evaluated on the same experiment \citep{kapusniak2024metric}. 

We implement \texttt{WLM} without including time features in the potential energy $\mathcal{U}[\rho_t]$ and report the results in \cref{tab:results_eb}. Even without leveraging explicit temporal information, \texttt{WLM} outperforms existing state-of-the-art methods. We summarize a few key insights. First, since cell dynamics remain unknown, a common approach has been to guess different priors and hope to produce developmental trajectories that have a reasonable curvature. Indeed, many of the methods in \cref{tab:results_eb} use different optimal transport or Lagrangian priors, which can significantly influence inference quality. In contrast, \texttt{WLM} directly learns population mechanics from the data, without specifying a reference process. Furthermore, \texttt{WLM}'s learned time-independent potential energy and friction can be interpreted in order to gain insight into the nature of the dynamics. In this experiment, we recover dissipative dynamics from moderate to high friction (see \cref{fig:eb_friction_holdout}), which is consistent with the fact that 
mathematical biologists commonly use gradient flows to model cell dynamics \citep{waddington2015animals, weinreb2018fundamental, lavenant2021towards}. Finally, \texttt{WLM}'s result on this dataset can likely be improved, since we did not use a validation set for early-stopping, as done in \textsc{OT-MFM} \citep{kapusniak2024metric}. 

Since the EB dataset contains the most samples per marginal, we also run an additional ablation to investigate the relationship between \texttt{WLM}'s performance ($W_1$ on holdouts) and runtime when populations are subsampled as mini-batches during training. We report results in \cref{fig:w1_vs_runtime} and interestingly note that in addition to decreasing runtime, mini-batching induces moderate performance gains for this experiment.

\begin{table}[!t]
\centering
\caption{Boids: Average $W_1$ distances between true and learned marginals. We distinguish between train times (first $50$) and forecast (next $50$).}
\label{tab:results_boids}

\footnotesize          
\renewcommand{\arraystretch}{0.85} 
\setlength{\tabcolsep}{4pt} 

\begin{tabular}{lcc}
\toprule
\textbf{Method} & Train $W_1$ ($\pm$SE) & Forecast $W_1$ ($\pm$SE) \\
\midrule
\textsc{JKOnet}* & $3.084 \pm 0.229$ & $3.174 \pm 0.084$ \\
\textsc{JKOnet}* (time-var.) & $4.400 \pm 0.229$ & $10.235 \pm 0.341$ \\
\textsc{nn-APPEX}   & $2.467 \pm 0.193$ & $3.078 \pm 0.199$ \\
\texttt{WLM} (ours)               & \textbf{0.496} $\pm$ \textbf{0.024} & \textbf{1.309} $\pm$ \textbf{0.034} \\
\bottomrule
\end{tabular}
\end{table}

\subsection{Emergent dynamics: Boids}
Murmuration is a swarming behaviour observed in large flocks of birds, whose collective flight produces intricate patterns in the sky. Boids is an influential model that was developed by \citet{reynolds1987flocks}. Indeed, Boids has been used to simulate swarms in high-budget movies \footnote{e.g. bat swarms and penguin flocks in Batman Returns (1992)} \citep{bajec2009organized} and is also used to model cognition \citep{rosas2020reconciling, lawrence2024aggregate} and automated vehicles \citep{saska2014swarms, knievel2023boids}. The algorithm simulates interacting agents that obey three rules:
\begin{enumerate}[noitemsep,nosep]
    \item \textbf{\textit{Separation}}: steer to avoid colliding with flockmates,
    \item \textbf{\textit{Alignment}}: steer towards average heading of flock,
    \item \textbf{\textit{Cohesion}}: steer towards the flock's centre of mass.
\end{enumerate}
While there are many approaches for simulating Boids \citep{schoenholz2020jax, hartman2006autonomous}, analytic solutions are intractable due to the complexity of the emergent dynamics.
 Given the absence of a reference process, we consider the gradient flow methods \textsc{JKOnet}$^\ast$ and \textsc{nn-APPEX} as baselines. To model mechanics that are completely determined by the population's potential energy, we implement \texttt{WLM} with $\gamma = 0$ and do not model time-dependence. All methods are trained on $50$ evenly spaced marginals with $1000$ Boids per time. We then reserve the next $50$ marginals as the test set to evaluate the ability of all methods to forecast the dynamics. 

We present numerical results in \cref{tab:results_boids} and qualitative results in \cref{fig:boids_comparison}. From these, we see that \textsc{JKOnet}$\ast$ and \textsc{nn-APPEX} cannot even fit the observed dynamics, as they only simulate slowly diffusing agents. This corroborates the expectation that Boids is far from being a gradient flow, due to its often periodic dynamics. In contrast, \texttt{WLM} learns population mechanics that cohere to the Boids' cohesion and repulsion agent-based rules. While the \texttt{WLM} rollouts lack some finer details, \texttt{WLM} simulates flocking behaviour, which match each of the outward and inward migrations of the Boids population, even when forecasting the population $50$ marginals beyond its training data. \texttt{WLM}'s learned mechanics also generalize to unseen populations, as visualized in \cref{fig:app_boids_generalization}, and in our interactive notebook \footnote{\scriptsize\url{https://colab.research.google.com/drive/1R4q5pysHaYOMIu77vT6JID7yCLYG3t4z?usp=sharing}}.

As an additional experiment, we test \texttt{WLM}'s robustness for learning Boids dynamics when the initial velocity is misspecified. Recall that \texttt{WLM} is based on our derivations in \cref{sec: equations_of_motion}, which state that the population mechanics are determined by the potential energy and the true initial velocities. In this experiment, we consider the cases where the initial velocity is completely misspecified as $0$, and when the initial velocity is estimated from the data, using DICE \citep{blickhan2025dice}. While access to the true initial velocity helps the model, accurate forecasts are still achieved when the velocity is misspecified (see \cref{tab:boids_v0_experiment}).

\section{Related Work}
\label{sec:related_work}
\paragraph{Wasserstein Lagrangian and Hamiltonian Flows}
The mathematical literature has developed Lagrangian and Hamiltonian formulations for modeling conservative dynamics in the Wasserstein space \citep{ambrosio2008hamiltonian, chow2020wasserstein}. \citet{neklyudov2023computational} was the first work to use Wasserstein Lagrangian flows as a model for inferring population dynamics given observational data. The main practical limitation of this work and related methods \citep{qian2024conditional,sun2025variational} is that the Lagrangian formulation requires a reference process in order to obtain a well-defined action minimization \eqref{eq: minimize_lagrangian_action} for inferring interpolants. In our work, we derive the Hamiltonian mechanics from \emph{damped Wasserstein Lagrangian flows}. This enables our method \texttt{WLM} can interpolate and forecast, without pre-specifying the Lagrangian. The incorporation of damping importantly allows \texttt{WLM} to learn dissipative population dynamics, which provide a stable representation of Wasserstein gradient flows in the overdamped limit.

\vspace{-1em}
\paragraph{Wasserstein Gradient Flows.} 
Gradient flows are the predominant model for population dynamics \citep{hashimoto2016learning, weinreb2018fundamental, lavenant2021towards, bunne2022proximal, neklyudov2023action, terpin2024learning, guangradient}. In this work, we build on mathematical ideas from Wasserstein gradient flows \citep{jordan1998variational,ambrosio2008gradient} in order to develop WLM as a richer class of second-order dynamics on the same space. We then demonstrate that \texttt{WLM} learns gradient flows as effectively as state-of-the-art gradient flow methods, while also effectively learning non-gradient dynamics, like Boids.
\vspace{-1em}
\paragraph{Reference-based Population Dynamics.} 
Many methods infer population dynamics given a reference process \citep{schiebinger2019optimal, bunne2023schrodinger,berlinghieri2025oh,petrovic2025curly}, or family of reference processes \citep{shen2024multi,zhang2024joint,guan2024identifying, guangradient}. Reference-based inference is often done to overcome the limitations of gradient flows \citep{shen2024multi, zhang2024joint, guan2024identifying,petrovic2025curly, berlinghieri2025oh}. Our work also addresses the limitations of gradient flows, but \texttt{WLM} is not reference-based, and is therefore less susceptible to model misspecification.
\vspace{-1em}
\paragraph{Flow and Action Matching.}
Flow matching \citep{lipman2022flow} and action matching \citep{neklyudov2023action} have been proposed to learn continuous interpolations from a finite set of time marginals, and have each inspired methods tailored to scientific inference \citep{atanackovic2024meta, kapusniak2024metric, blickhan2025dice}. However, unlike \texttt{WLM}, flow matching and action matching methods are unable to forecast, since these models are only trained to interpolate within the observation period. 


\section{Conclusion}
\label{sec:conclusion}
In this work, we proposed and advanced a new direction for inferring population dynamics, by emphasizing the \emph{population mechanics} as an intrinsic and learnable property. We developed a comprehensive theory for \emph{Wasserstein Lagrangian Mechanics} by considering a population-level principle of least action with damping, which produces second-order Hamiltonian mechanics that encompass classical mechanics, quantum mechanics, and gradient flows. We then leverage these insights to develop \texttt{WLM}, a powerful inference method that outperforms state-of-the-art methods for interpolating and forecasting unseen marginals on real and synthetic datasets from the literature. \texttt{WLM} also learns the emergent dynamics from interacting Boids, demonstrating exciting new capabilities from learning the mechanics.

\paragraph{Limitations and Directions for Future Work.}
While we believe that this work makes important theoretical and practical advances, the learning of population mechanics is a nascent field. We outline limitations from our paper, which may provide interesting opportunities for future work.

There are many open questions about the expressivity and identifiability of WLM systems. While WLM can describe classical mechanics, quantum mechanics, and gradient flows, the population dynamics of other systems may not obey a Wasserstein least action principle. With respect to identifiability, we have already seen that gradient flows admit multiple WLM formulations in \cref{prop:GF_as_WLM}. We conjecture that under certain assumptions, like $\gamma = 0$, non-degeneracy, and non-equilibrium, a wide class of WLM systems can be uniquely identified from continuous population dynamics. Furthermore, without imposing additional assumptions, various individual trajectories produce the same population dynamics. In this work, we model trajectories using the ODE from the canonical continuity equation. Interesting future work remains for modeling other trajectories of interest, such as solutions to the Navier-Stokes equations or solutions to the Schr\"odinger equation.

For inference, the largest practical barrier is that \texttt{WLM} simulates dynamics in order to train the mechanics model. Given sufficiently dense data, a simulation free approach may be feasible by first learning the vector field $\nabla s_t$, and then extracting the Hamiltonian mechanics. Our initial attempts were unsuccessful, as any errors incurred by the estimation are inherited downstream. 

\newpage
\paragraph{Broader Impact.}
This work is primarily a theoretical and methodological contribution towards scientific inference, and poses little risk for social harm. Better prediction of population dynamics could potentially be used for malicious development, but we do not see this as a significant risk at this time, as the theory primarily pertains to cellular and molecular systems.

\section*{Acknowledgments}
VG was supported by the PGS-D scholarship, funded by the Natural Sciences and Engineering Research Council of Canada, and was affiliated with Mila as a visiting researcher. The research was enabled in part by computational resources provided by the Digital Research Alliance of Canada (\url{https://alliancecan.ca}) and Mila (\url{https://mila.quebec}). In addition, KN was supported by IVADO and Institut Courtois. LA was in-part supported by the Eric and Wendy Schmidt Center at the Broad Institute of MIT and Harvard, and by the NSERC Postdoctoral Fellowship.
This project was undertaken thanks to funding from IVADO and the Canada First Research Excellence Fund.

\bibliography{references}
\bibliographystyle{icml2026}


\clearpage
\onecolumn
\appendix

\section{Additional Proofs}
\label{sec:app_proofs}

\begin{theorem}[Hamiltonian equations of motion]
\label{thm:hamiltonian_main_thm_appendix}
Let $(p_t)_{0 \le t \le 1}$ minimize action with respect to the damped Wasserstein Lagrangian 
\begin{align}
    \mathcal{L}[\rho_t, s_t,t] = e^{\gamma t} \left(\frac{1}{2}\int \|\nabla s_t\|^2 \rho_t dx - \mathcal{U}[\rho_t]\right)
\end{align}
with damping $\gamma \ge 0$. Then, $(p_t)_{0 \le t \le 1}$ obeys the continuity equation \eqref{eq: continuity_eq}, such that the driving potential $s_t$ obeys 
\begin{align}
    \dot{s}_t(x)&= -\frac{1}{2}\|\nabla s_t(x)\|^2 - \frac{\delta \mathcal{U}[\rho_t]}{\delta \rho_t}(x) - \gamma s_t(x).
\end{align}
The marginals $(p_t)_{0 \le t \le 1}$ can therefore be produced by sampling trajectories, obeying the second order mechanics:
\begin{mdframed}[
    hidealllines=true, 
    backgroundcolor=gray!8, 
    nobreak=true,
    innertopmargin=0pt,    
    innerbottommargin=10pt  
]
\begin{align}
    \frac{d}{dt} x_t &= v_t \coloneq \nabla s_t(x_t), & x_0 \sim p_0 \\
    \frac{d}{dt} v_t &= -\nabla \frac{\delta \mathcal{U}[\rho_t]}{\delta \rho_t}(x_t) - \gamma v_t, & v_0 = \nabla s_0(x_0)
\end{align}
\end{mdframed}
\end{theorem}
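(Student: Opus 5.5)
We will derive the Hamiltonian equations from the first-order optimality conditions \eqref{eq:first_order_optimality}, treating the continuity equation as a constraint enforced by a Lagrange multiplier. Using \cref{prop:unique_conservative_vector_field}, every absolutely continuous curve can be written in the coordinates $(\rho_t, s_t)$ with $\dot\rho_t = -\nabla\cdot(\rho_t\nabla s_t)$. We therefore minimize the action over pairs $(\rho_t, s_t)$ subject to this equation and to the marginal constraints $\rho_{t_i}=p_{t_i}$. Following the computational Lagrangian-flow viewpoint of \citet{neklyudov2023computational}, we dualize the constraint with a multiplier $\phi_t(x)$. This gives the saddle functional
\begin{align*}
\int_0^1 e^{\gamma t}\Big(\tfrac12\!\int\|\nabla s_t\|^2\rho_t\,dx - \mathcal{U}[\rho_t]\Big)dt + \int_0^1\!\!\int \phi_t\big(\dot\rho_t + \nabla\cdot(\rho_t\nabla s_t)\big)\,dx\,dt .
\end{align*}
Only test functions $h\in C_c^\infty(\R^d\times(0,1))$ are used, so no boundary terms arise. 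This holds in particular on each open interval $(t_i,t_{i+1})$, where the marginal constraints do not act.

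First we take the variation in $s_t$. After integrating by parts in $x$, the stationarity condition reads $\nabla\cdot\big(\rho_t(e^{\gamma t}\nabla s_t - \nabla\phi_t)\big)=0$ for all $h$. The minimal-norm representative in \cref{prop:unique_conservative_vector_field} is a gradient lying in the closure of $\{\nabla h\}$ in $L^2(\rho_t)$. Hence the divergence-free component vanishes and $\nabla\phi_t = e^{\gamma t}\nabla s_t$ $\rho_t$-a.e. We then fix the additive constant, which is free per time, so that $\phi_t = e^{\gamma t}s_t$.

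Next we take the variation in $\rho_t$. Integrating by parts in time in $\int\phi_t\dot\rho_t$, and in space in the divergence term, gives
\begin{align*}
e^{\gamma t}\Big(\tfrac12\|\nabla s_t\|^2 - \tfrac{\delta\mathcal{U}}{\delta\rho_t}\Big) - \dot\phi_t - \nabla\phi_t\cdot\nabla s_t = 0 .
\end{align*}
We substitute $\phi_t=e^{\gamma t}s_t$, so that $\dot\phi_t = e^{\gamma t}(\dot s_t+\gamma s_t)$ and $\nabla\phi_t\cdot\nabla s_t = e^{\gamma t}\|\nabla s_t\|^2$. Dividing by $e^{\gamma t}$ then yields \eqref{eq:potential_evolution} directly. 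Any leftover constant in time is absorbed into the gauge freedom of $s_t$.

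Finally, we pass to trajectories. Let $x_t$ solve $\dot x_t=\nabla s_t(x_t)$ with $x_0\sim p_0$. By the superposition principle \citep[Theorem 8.2.1]{ambrosio2008gradient}, the law of $x_t$ is $p_t$. Differentiating $v_t=\nabla s_t(x_t)$ along the flow gives $\dot v_t = \nabla\dot s_t + (\nabla^2 s_t)\nabla s_t = \nabla\big(\dot s_t + \tfrac12\|\nabla s_t\|^2\big)$. Taking the gradient of \eqref{eq:potential_evolution}, this equals $-\nabla\tfrac{\delta\mathcal U}{\delta\rho_t}(x_t)-\gamma v_t$, which is the claimed second-order system.

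The main obstacle is the $s$-variation, specifically the identification $\nabla\phi_t = e^{\gamma t}\nabla s_t$. It requires arguing that the multiplier's gradient and $\nabla s_t$ differ only by a $\rho_t$-divergence-free field. That field must then be killed using the tangent-space characterization of $\nabla s_t$. Regularity is also needed to justify the functional derivatives and the chain rule along trajectories, so we will assume sufficient smoothness of $s_t$, $\rho_t$ and $\mathcal U$ and argue formally at that level.
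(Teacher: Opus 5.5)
Your proposal is correct and follows essentially the same route as the paper. The paper adds the vanishing term $\int_0^1 e^{\gamma t}\int s_t\bigl(\dot\rho_t+\nabla\cdot(\rho_t\nabla s_t)\bigr)\,dx\,dt$, which is your multiplier with $\phi_t=e^{\gamma t}s_t$ put in from the start, and then does the same $\rho$-variation and the same material-derivative computation as you; your extra step of deriving $\nabla\phi_t=e^{\gamma t}\nabla s_t$ from the $s$-variation (and absorbing time-dependent constants into the gauge of $s_t$) justifies that choice of multiplier, which the paper takes for granted.
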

\begin{proof}
The action functional is given by
\begin{align}
\mathcal{S}[\rho_t, s_t] &=  \int_0^1 e^{\gamma t}\left(\frac{1}{2}\int \|\nabla s_t\|^2 \rho_t dx - \mathcal{U}[\rho_t]\right) dt.
\label{eq:damped_action_functional}
\end{align}
First, we note that by the continuity equation \eqref{eq: continuity_eq}, the term
$\int_0^1 e^{\gamma t} \int s_t \bigl(\dot{\rho}_t + \nabla \cdot (\rho_t \nabla s_t) \bigr)dx dt$ is identically $0$, so we can add it to the damped action functional \eqref{eq:damped_action_functional}. Integrating by parts allows us to rewrite $\mathcal{S}[\rho_t, s_t]$ as
\begin{align*}
     \mathcal{S}[\rho_t, s_t]=\int_0^1 e^{\gamma t}\left( -\frac12 \int \|\nabla s_t\|^2 \rho_t dx - \mathcal{U}[\rho_t] + \int  s_t \dot{\rho}_t dx\right) dt.
\end{align*}
Then, by the principle of least action, we have the optimality conditions
\eqref{eq:first_order_optimality}. While $\frac{\delta \mathcal{S}[\rho_t, s_t]}{\delta s_t}[h_t]=0$ simply recovers the continuity equation \eqref{eq: continuity_eq}, we can use $\frac{\delta \mathcal{S}[\rho_t, s_t]}{\delta \rho_t}[h_t]=0$ to derive the evolution of $s_t$. Integrating by parts yields
\begin{align*}
    0 = \int_0^1e^{\gamma t}  \int \left(-\frac12 h_t \|\nabla s_t\|^2  - h_t \frac{\delta \mathcal{U}[\rho_t]}{\delta \rho_t}+ s_t \dot{h}_t\right)  dx dt\\
    = \int_0^1  e^{\gamma t}  \int  h_t\underbrace{\left(-\frac12 \|\nabla s_t\|^2 - \frac{\delta \mathcal{U}[\rho_t]}{\delta \rho_t} - \dot{s}_t - \gamma s_t \right)}_{=0}dxdt
\end{align*}
We therefore obtain the evolution equation of the driving potential \eqref{eq:potential_evolution}, since $e^{\gamma t} > 0$ and the above holds for all test functions $h_t$.
Then, to determine the induced trajectory $(x_t)_{0 \le t \le 1}$ for a particle $x_0 \sim p_0$, we set $v_t(x)=\nabla s_t(x)$ and compute the total derivative of the particle's time-varying velocity, $\dot x_t=v_t(x_t)$:
\begin{align}
\frac{d}{dt}v_t(x_t)
&= \bigl(\partial_t v_t + (v_t\!\cdot\nabla)v_t\bigr)(x_t),
\label{eq:expanded_material_derivative}
\end{align}
where we clarify that the operator $v_t \cdot \nabla = \sum_{i=1}^{d}v_t^{(i)} \frac{\partial}{\partial x^{(i)}}$ and hence, $v_t \cdot \nabla v_t$ is a vector with components
\begin{align*}
    [v_t \cdot \nabla v]_j= \sum_{i=1}^{d}v_t^{(i)} \frac{\partial v_t^{(j)}}{\partial x^{(i)}}
\end{align*}
The first term in \eqref{eq:expanded_material_derivative} is computed by taking the gradient on both sides of the evolution equation of $s_t$ \eqref{eq:potential_evolution}:
\[
\partial_t v_t 
= -\nabla\!\Bigl(\frac{\delta\mathcal U[\rho_t]}{\delta\rho_t}\Bigr) - \gamma v_t - \nabla (\frac12 \|v_t\|^2).
\]
Then, the second term simplifies to $(v_t\!\cdot\nabla)v_t = \nabla (\frac12 \|v_t\|^2)$, since $v_t=\nabla s_t$ is conservative. It follows that
\begin{align*}
    \frac{d}{dt}v_t &= -\nabla \frac{\delta \mathcal{U}[\rho_t]}{\delta \rho_t}(x_t) - \gamma v_t, \quad v_0 = \nabla s_0(x_0)
\end{align*}
\end{proof}

Intuitively, the potential energy determines the evolution of the velocity field, which determines the population dynamics. These population dynamics are attained by aggregating paths from the corresponding ODE.


\begin{prop}[Wasserstein Lagrangian of a gradient flow]
Let $\gamma = 0$. Then, $(\rho_t)_{0 \le t \le 1}$ is a gradient flow of $\mathcal{F}[\rho_t]$ if and only if it minimizes the Wasserstein Lagrangian action 
\begin{align}
    \mathcal{S}[\rho_t,s_t] &= \int_0^1 \left\{ \mathcal{K}[\rho_t,s_t] - \mathcal{U}[\rho_t] \right\}dt\\
    &\coloneq\int_0^1 \left\{ \frac12 \int \| \nabla s_t\|^2 \rho_t \ dx  + \frac 12\int \| \nabla \frac{\delta}{\delta \rho_t} \mathcal{F}[\rho_t]\|^2 \rho_t \ dx \right\}dt,
\end{align}
and additionally has initial velocity $v_0 \propto -\nabla \frac{\delta}{\delta \rho_0}\mathcal{F}[\rho_0]$.
\label{prop:WGF_as_WLM}
\end{prop}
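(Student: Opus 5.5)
The plan is to compare the given action $\mathcal{S}$ with the ``defect'' functional
\begin{align*}
\mathcal{J}[\rho_t,s_t] = \frac12\int_0^1 \int \Bigl\| \nabla s_t + \nabla \frac{\delta}{\delta \rho_t} \mathcal{F}[\rho_t]\Bigr\|^2 \rho_t \, dx\, dt \;\ge\; 0,
\end{align*}
which vanishes exactly on gradient flows of $\mathcal{F}$. By \cref{prop:unique_conservative_vector_field}, a gradient flow has canonical field $\nabla s_t = -\nabla \frac{\delta \mathcal{F}}{\delta \rho_t}$ $\rho_t$-a.e. (fixing the proportionality constant to $1$ by rescaling time).

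\textbf{Step 1: relate $\mathcal{J}$ to $\mathcal{S}$.} Expand the square in $\mathcal{J}$. The cross term is
\begin{align*}
\int_0^1 \int \nabla s_t \cdot \nabla \frac{\delta \mathcal{F}}{\delta \rho_t}\, \rho_t \, dx\, dt .
\end{align*}
Using the continuity equation \eqref{eq: continuity_eq} and integrating by parts, this equals
\begin{align*}
\int_0^1 \int \frac{\delta \mathcal{F}}{\delta \rho_t}\, \dot\rho_t \, dx\, dt = \mathcal{F}[\rho_1]-\mathcal{F}[\rho_0].
\end{align*}
This yields the energy identity
\begin{align*}
\mathcal{J} = \mathcal{S} + \mathcal{F}[\rho_1] - \mathcal{F}[\rho_0],
\end{align*}
with $\mathcal{U}[\rho] = -\frac12\int\|\nabla \frac{\delta\mathcal F}{\delta\rho}\|^2\rho\,dx$. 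Over curves with fixed endpoints $\rho_0=p_0$ and $\rho_1=p_1$, the boundary term is constant. Hence $\mathcal{S} \ge \mathcal{F}[p_0]-\mathcal{F}[p_1]$, with equality iff $\mathcal{J}=0$.

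\textbf{Step 2: the forward direction.} If $(\rho_t)$ is a gradient flow, then $\mathcal{J}=0$, so it attains the lower bound and minimizes $\mathcal{S}$ among curves joining its own endpoints. Its initial velocity is $v_0=\nabla s_0 = -\nabla\frac{\delta\mathcal F}{\delta\rho_0}$, as required.

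\textbf{Step 3: the converse.} This is the main obstacle. A minimizer of $\mathcal{S}$ between $p_0$ and $p_1$ need not have $\mathcal{J}=0$, since no gradient flow may connect $p_0$ to $p_1$ in unit time. This is exactly why the initial velocity hypothesis is needed. I would argue through the Hamiltonian equations of \cref{thm:hamiltonian_eqs_wlm} with $\gamma=0$: any minimizer is a stationary point, so $(\rho_t,s_t)$ solves the Hamilton--Jacobi system \eqref{eq:potential_evolution} with this $\mathcal{U}$.

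Next I would check that the gradient flow $\bar\rho_t$ started at $p_0$, paired with $\bar s_t=-\frac{\delta\mathcal F}{\delta\bar\rho_t}$, also solves this system. The cleanest route is that the first variation of $\mathcal{S}$ and of $\mathcal{J}$ differ by a null Lagrangian (a total time derivative), so both have the same Euler--Lagrange equations. A critical point of $\mathcal{J}$ at its zero level is automatically stationary because $\mathcal{J}\ge0$ is quadratic in the defect. The alternative is to verify directly that $\partial_t\bar s_t = -\frac{d}{dt}\frac{\delta \mathcal F}{\delta \bar\rho_t}$ matches $-\frac12\|\nabla \bar s_t\|^2-\frac{\delta\mathcal U}{\delta\rho}$, which requires computing $\frac{\delta\mathcal U}{\delta\rho}$ involving the second variation of $\mathcal F$. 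I expect this to be messy, so the null-Lagrangian argument is preferable.

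Both $(\rho_t,s_t)$ and $(\bar\rho_t,\bar s_t)$ solve the same second-order system \eqref{eq:equations_of_motion} with identical initial data $x_0\sim p_0$ and $v_0=-\nabla\frac{\delta\mathcal F}{\delta\rho_0}$. Uniqueness for that system then gives $\rho_t=\bar\rho_t$, which is a gradient flow. This uniqueness I would assume under sufficient regularity of $\mathcal{F}$, Lipschitz forces in particular, and I would flag it as the technical gap.
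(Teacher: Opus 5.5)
Your Steps 1 and 2 match the paper's argument. You expand the square, use the continuity equation to turn the cross term into $\mathcal{F}[\rho_1]-\mathcal{F}[\rho_0]$, and note that a gradient flow attains the lower bound.

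For the converse you take a genuinely different route. The paper works with the given minimizer directly. It sets $g_t = v_t + \nabla\frac{\delta\mathcal F}{\delta\rho_t}$ and computes $\frac{\delta\mathcal U}{\delta\rho}$ explicitly, which brings in the second variation of $\mathcal F$. It then shows that along the Hamiltonian flow
\[
\frac{d}{dt}g_t = \Bigl(\nabla^2\frac{\delta\mathcal F}{\delta\rho_t}\Bigr) g_t - \nabla\int\frac{\delta^2\mathcal F}{\delta\rho_t(x)\delta\rho_t(y)}\,\nabla\cdot\bigl(\rho_t(y) g_t(y)\bigr)\,dy,
\]
which is homogeneous and linear in $g_t$, so $g_0=0$ forces $g_t\equiv 0$. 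You instead build a comparison solution. By your own Step 2, the gradient flow $\bar\rho_t$ from $p_0$ minimizes $\mathcal S$ between $p_0$ and $\bar\rho_1$, so \cref{thm:hamiltonian_eqs_wlm} applies to it directly. The null-Lagrangian discussion is therefore unnecessary. Uniqueness for the Hamiltonian initial value problem then identifies $\bar\rho_t$ with the given minimizer.

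Your route avoids the second-variation computation entirely. It also explains structurally why gradient-flow curves are invariant under the Hamiltonian flow. The price is two extra requirements: existence of a sufficiently regular gradient flow from $p_0$ on $[0,1]$, and uniqueness for the full nonlinear mean-field system. The paper needs only uniqueness of the zero solution of a linear equation along the given curve. That equation is nonlocal and involves derivatives of $g_t$, so it is not a plain Gr\"onwall step either. The paper's explicit defect equation also shows how deviations $g_0\neq 0$ propagate, which underlies the instability remark in the main text.

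Two small caveats. First, a Lipschitz-force hypothesis excludes entropic $\mathcal F$, where $\mathcal U$ is minus the Fisher information and the force involves derivatives of $\rho_t$. The paper's appeal to ``standard regularity'' is equally informal in that case. Second, the proportionality constant cannot be normalized by rescaling time with $\mathcal U$ held fixed. A rate-$\lambda$ gradient flow solves the Hamiltonian system for $\lambda^2\mathcal U$ rather than $\mathcal U$. So the statement must be read with $\lambda=1$, which is effectively what the paper does when it sets $\lambda=1$.
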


\begin{proof}
Recall that population dynamics $(\rho_t)_{0 \le t \le 1}$ are a Wasserstein gradient flow of $\mathcal{F}$ if and only if $\nabla s_t \propto -\nabla \frac{\delta}{\delta \rho_t}\mathcal{F}(\rho_t)$ for $\rho_t$ a.e. $(x,t) \in \R^d \times [0,1]$ \citep{santambrogio2017euclidean}. We first prove that a necessary condition is that the dynamics minimize the Wasserstein Lagrangian action given by the $W_2$ kinetic energy \eqref{eq:canonical_kinetic_energy} and the potential energy $\mathcal{U}[\rho_t] = -\frac 12 \int  \|\nabla \frac{\delta}{\delta \rho_t} \mathcal{F}[\rho_t] \|^2\rho_tdx$. 

Let $\mathcal{F}$ be a sufficiently regular free energy functional so that $\frac{\delta}{\delta \rho_t}\mathcal{F}(\rho_t)$ exists and let $\rho_0$ and $\rho_1$ be two observed marginals. By definition, if $(\rho_t)_{0 \le t \le 1}$ is a gradient flow for $\mathcal{F}$, then it minimizes the non-negative objective
\begin{align}
    \mathcal{J}[\rho_t, s_t] = \frac12\int_0^1 \int \| \nabla s_t + \nabla \frac{\delta}{\delta \rho_t} \mathcal{F}[\rho_t]\|^2 \rho_t \ dx \  dt,
    \label{eq:gradient_flow_cost}
\end{align}
Thus, given fixed $\rho_0=p_0$ and $\rho_1 = p_1$, we seek to minimize \eqref{eq:gradient_flow_cost}. However, we may rewrite
\begin{align*}
    \mathcal{J}[\rho_t, s_t] &= \int_0^1 \left\{ \frac12 \int \| \nabla s_t\|^2 \rho_t \ dx  + \frac 12\int \| \nabla \frac{\delta}{\delta \rho_t} \mathcal{F}[\rho_t]\|^2 \rho_t \ dx  \right\}dt +\int_0^1 \int \nabla \frac{\delta}{\delta \rho_t} \mathcal{F}[\rho_t] \cdot \nabla s_t \rho_t \ dx \ dt \\
    &= \int_0^1 \left\{ \frac12 \int \| \nabla s_t\|^2 \rho_t \ dx  + \frac 12\int \| \nabla \frac{\delta}{\delta \rho_t} \mathcal{F}[\rho_t]\|^2 \rho_t \ dx  \right\}dt - \int_0^1 \int \frac{\delta}{\delta \rho_t} \mathcal{F}[\rho_t] \nabla \cdot( \nabla s_t \rho_t) \ dx \ dt \\
    &= \int_0^1 \left\{ \frac12 \int \| \nabla s_t\|^2 \rho_t \ dx  + \frac 12\int \| \nabla \frac{\delta}{\delta \rho_t} \mathcal{F}[\rho_t]\|^2 \rho_t \ dx  \right\}dt + \int_0^1 \int \frac{\delta}{\delta \rho_t} \mathcal{F}[\rho_t] \frac{\partial}{\partial t} \rho_t \ dx \ dt \\
    &= \int_0^1 \left\{ \frac12 \int \| \nabla s_t\|^2 \rho_t \ dx  + \frac 12\int \| \nabla \frac{\delta}{\delta \rho_t} \mathcal{F}[\rho_t]\|^2 \rho_t \ dx  \right\}dt + \int_0^1  \frac{d}{dt} \mathcal{F}[\rho_t]\ dt \\
    &= \int_0^1 \left\{ \frac12 \int \| \nabla s_t\|^2 \rho_t \ dx  + \frac 12\int \| \nabla \frac{\delta}{\delta \rho_t} \mathcal{F}[\rho_t]\|^2 \rho_t \ dx  \right\}dt + \mathcal{F}[\rho_1] - \mathcal{F}[\rho_0],
\end{align*}
where we expanded the square, integrated by parts, applied the continuity equation, and the chain rule. However, $\mathcal{F}[\rho_1]$ and $\mathcal{F}[\rho_0]$ are independent of the minimization, since the endpoints are fixed. Thus, minimizing $\mathcal{J}[\rho_t, s_t]$ is equivalent to minimizing 
\begin{align}
    \mathcal{S}[\rho_t,s_t] &= \int_0^1 \left\{ \mathcal{K}[\rho_t,s_t] - \mathcal{U}[\rho_t] \right\}dt\\
    &\coloneq\int_0^1 \left\{ \frac12 \int \| \nabla s_t\|^2 \rho_t \ dx  + \frac 12\int \| \nabla \frac{\delta}{\delta \rho_t} \mathcal{F}[\rho_t]\|^2 \rho_t \ dx \right\}dt,
\end{align}
which is precisely the Wasserstein Lagrangian action with canonical kinetic energy $\mathcal{K}[\rho_t, s_t]$\eqref{eq:canonical_kinetic_energy} and potential energy $\mathcal{U}[\rho_t] = -\frac 12 \int  \|\nabla \frac{\delta}{\delta \rho_t} \mathcal{F}[\rho_t] \|^2 \rho_t dx$. Thus, action minimization with respect to these Wasserstein energies is a necessary condition to be a gradient flow.

Now, given that the marginals $(\rho_t)_{0 \le t \le 1}$ minimize the action $\mathcal{S}$, we want to show that $(\rho_t)_{0 \le t \le 1}$ is a gradient flow of $\mathcal{F}$ if and only if we have the precise initial velocity $v_0 \propto -\nabla \frac{\delta \mathcal{F}[\rho_0]}{\delta \rho_0}$. Since a gradient flow must have $\nabla s_t(x) \propto -\nabla \frac{\delta}{\delta \rho_t}\mathcal{F}(\rho_t)(x)$ almost everywhere, the idea is to define a discrepancy function to clarify the analysis. Let $\lambda$ be the scalar proportion and 
\begin{equation}
    g_t(x) = v_t(x) + \lambda \nabla \frac{\delta \mathcal{F}[\rho_t]}{\delta \rho_t}(x).
\end{equation}
By definition, the dynamics are a gradient flow if and only if $g_t = 0$ for all $t \in [0, 1]$. We would then complete the proof of the proposition by showing that $\frac{d}{dt} g_t(x_t)$ defines a homogeneous linear evolution, since this would imply that the only valid solution is $g_0 = 0 \implies v_0 \propto -\nabla \frac{\delta \mathcal{F}[\rho_0]}{\delta \rho_0} $. The rest of the proof consists in computations to demonstrate this. We set $\lambda=1$ and note that the steps are equivalent for other values of $\lambda$. First, we compute
\begin{equation}
    \frac{d}{dt}g_t = \frac{d}{dt}v_t + \frac{d}{dt} \nabla \frac{\delta \mathcal{F}[\rho_t]}{\delta \rho_t}(x_t) = \frac{d}{dt}v_t + \nabla \frac{\partial}{\partial t} \frac{\delta \mathcal{F}[\rho_t]}{\delta \rho_t} + \nabla^2 \frac{\delta \mathcal{F}[\rho_t]}{\delta \rho_t} \cdot v_t.
    \label{eq:gt_evolution}
\end{equation}
We compute the first term by considering the Hamiltonian equation $\frac{d}{dt}v_t = -\nabla \frac{\delta \mathcal{U}}{\delta \rho_t}$ \eqref{eq:equations_of_motion} with the potential energy that we previously derived, $\mathcal{U}[\rho_t] = -\frac{1}{2} \int \|\nabla \frac{\delta \mathcal{F}[\rho_t]}{\delta \rho_t}\|^2 \rho_t \ dx$. We have that
\begin{align*}
\frac{\delta \mathcal{U}}{\delta \rho} = -\frac{1}{2}\|\nabla \frac{\delta \mathcal{F}}{\delta \rho}\|^2 + \int \frac{\delta^2 \mathcal{F}}{\delta \rho(x) \delta \rho(y)} \nabla \cdot (\rho_t(y) \nabla \frac{\delta \mathcal{F}}{\delta \rho_t}(y)) dy
\end{align*}
it follows that
\begin{equation}
    \frac{d}{dt}v_t = \nabla \left( \frac{1}{2} \left\| \nabla \frac{\delta \mathcal{F}[\rho_t]}{\delta \rho_t} \right\|^2 \right) - \nabla \int \frac{\delta^2 \mathcal{F}}{\delta \rho_t(x) \delta \rho_t(y)} \nabla \cdot \left( \rho_t(y) \nabla \frac{\delta \mathcal{F}[\rho_t]}{\delta \rho_t}(y) \right) dy.
\end{equation}
Then, using the continuity equation $\dot{\rho}_t = -\nabla \cdot (\rho_t v_t)$ and chain rule, the second term is
\begin{equation}
    \nabla \frac{\partial}{\partial t} \frac{\delta \mathcal{F}[\rho_t]}{\delta \rho_t} = \nabla \int \frac{\delta^2 \mathcal{F}}{\delta \rho_t(x) \delta \rho_t(y)} \dot{\rho}_t(y) dy = -\nabla \int \frac{\delta^2 \mathcal{F}}{\delta \rho_t(x) \delta \rho_t(y)} \nabla \cdot (\rho_t(y) v_t(y)) dy.
\end{equation}
We plug these terms back into the expression for $\frac{d}{dt}g_t$ \eqref{eq:gt_evolution}, separating all of the integral terms from the non-integral terms:
\begin{align}
    \frac{d}{dt}g_t &= A - B \\
    A & = \left[\nabla \left( \frac{1}{2} \left\| \nabla \frac{\delta \mathcal{F}[\rho_t]}{\delta \rho_t} \right\|^2 \right) + \nabla^2 \frac{\delta \mathcal{F}[\rho_t]}{\delta \rho_t} \cdot v_t \right]\\
    B & = \left[ \nabla \int \frac{\delta^2 \mathcal{F}}{\delta \rho_t(x) \delta \rho_t(y)} \nabla \cdot \left( \rho_t(y) \nabla \frac{\delta \mathcal{F}[\rho_t]}{\delta \rho_t}(y) \right) dy + \nabla \int \frac{\delta^2 \mathcal{F}}{\delta \rho_t(x) \delta \rho_t(y)} \nabla \cdot (\rho_t(y) v_t(y)) dy\right]
\end{align}
We simplify the first term in $A$ with the vector identity $\nabla (\frac12 \|\nabla f\|^2) = (\nabla^2f) \cdot \nabla f$ to factor out $g_t$ as desired:
\begin{align}
A &= \left(\nabla^2 \frac{\delta \mathcal{F}}{\delta \rho_t}\right) \cdot \nabla \frac{\delta \mathcal{F}}{\delta \rho_t} + \left(\nabla^2 \frac{\delta \mathcal{F}}{\delta \rho_t}\right) \cdot v_t \\ &= \left(\nabla^2 \frac{\delta \mathcal{F}}{\delta \rho_t}\right) \cdot \left( \nabla \frac{\delta \mathcal{F}}{\delta \rho_t} + v_t\right) = \left(\nabla^2 \frac{\delta \mathcal{F}}{\delta \rho_t}\right) \cdot g_t
\end{align}
We do a similar process for the integral terms in $B$, just by using the linearity of the divergence operator:
\begin{align}
    B &= \nabla \int \frac{\delta^2 \mathcal{F}}{\delta \rho_t(x) \delta \rho_t(y)} \left[ \nabla \cdot \left( \rho_t \nabla \frac{\delta \mathcal{F}}{\delta \rho_t} \right) + \nabla \cdot (\rho_t v_t) \right] dy \\ &= \nabla \int \frac{\delta^2 \mathcal{F}}{\delta \rho_t(x) \delta \rho_t(y)} \nabla \cdot \left( \rho_t(y) \left[ v_t(y) + \nabla \frac{\delta \mathcal{F}}{\delta \rho_t}(y) \right] \right) dy \\ &= \nabla \int \frac{\delta^2 \mathcal{F}}{\delta \rho_t(x) \delta \rho_t(y)} \nabla \cdot (\rho_t(y) g_t(y)) dy
\end{align}
Thus, $\frac{d}{dt}{g}_t=A-B$ is homogeneous and linear in $g_t$, which admits $0$ as solution. By uniqueness under standard regularity, we have that $g_t = 0$ for all $t \ge 0$ as desired.
\end{proof}

\TractableAccel* 

\begin{proof}
Since $p = \hat{p} = \frac{1}{N}\sum_{i=1}^{N}\delta_{x^{(i)}}$, it follows that the potential energy supported on the observed samples is equivalent to the true Wasserstein potential energy, i.e., $\Psi(x^{(1)}, \ldots, x^{(N)})= \mathcal{U}[p]$. 

We now prove the identity by considering a perturbation of sample $x^{(j)}$ in the direction of a vector $\eta \in \mathbb{R}^d$, and then matching the corresponding directional derivative for both $\Psi(x^{(1)}, \ldots, x_t^{(N)})$ and $\mathcal{U}[p]$. Indeed, equality implies that their directional derivatives are also the same.

First, we consider $\Psi$ and derive:
\begin{align*}
\frac{d}{d\epsilon}\bigg\rvert_{\epsilon=0}\Psi(x^{(1)}, \ldots, x^{(j)}+\epsilon \eta, \ldots, x^{(N)}) = \nabla_{x^{(j)}}\Psi(x^{(1)}, \ldots, x_t^{(N)}) \cdot \eta
\end{align*}

Then, to evaluate the analogous derivative for $\mathcal{U}[p]$, we first consider the perturbed measure:
\begin{align}
p_{\epsilon} = \frac{1}{N}\left(\sum_{i \neq j}^{N}\delta_{x^{(i)}} + \delta_{x^{(j)}+\epsilon \eta}\right)
\end{align}

The first order Taylor gives us:
\begin{align}
\mathcal{U}[p_\epsilon]-\mathcal{U}[p] = \int \frac{\delta}{\delta p}\mathcal{U}[p](x)\delta p(x)dx + o(\epsilon)
\end{align}

Plugging in our perturbation then yields:
\begin{align}
\mathcal{U}[p_\epsilon]-\mathcal{U}[p] &= \frac{1}{N}\int \frac{\delta}{\delta p}\mathcal{U}[p](x) (\delta_{x^{(j)}+\epsilon \eta}- \delta_{x^{(j)}})dx + o(\epsilon)\\
&=\frac{1}{N}\left( \frac{\delta}{\delta p}\mathcal{U}[p](x^{(j)}+\epsilon\eta) -\frac{\delta}{\delta p}\mathcal{U}[p](x^{(j)})\right) + o(\epsilon)
\end{align}

We then first order Taylor expand $\frac{\delta}{\delta p}\mathcal{U}[p](x^{(j)}+\epsilon\eta)]$ to get (using the chain rule):
\begin{align}
\mathcal{U}[p_\epsilon]-\mathcal{U}[p] &= \frac{1}{N}\left( \epsilon \nabla_x\frac{\delta}{\delta p}\mathcal{U}[p](x^{(j)})\cdot \eta + o(\epsilon)\right) + o(\epsilon)
\end{align}

Hence, it follows that:
\begin{align}
\frac{d}{d\epsilon}\bigg\rvert_{\epsilon=0}\mathcal{U}[p_\epsilon] = \frac{1}{N}\nabla_x\frac{\delta}{\delta p}\mathcal{U}[p](x^{(j)})\cdot \eta
\end{align}

Equating both sides gives us the desired (weak) equation:
\begin{align*}
\frac{d}{d\epsilon}\bigg\rvert_{\epsilon=0}\Psi(x^{(1)}, \ldots, x^{(j)}+\epsilon \eta, \ldots, x^{(N)}) &= \frac{d}{d\epsilon}\bigg\rvert_{\epsilon=0}\mathcal{U}[p_\epsilon]\\
\implies \nabla_{x^{(j)}}\Psi(x^{(1)}, \ldots, x_t^{(N)}) \cdot \eta&= \frac{1}{N}\nabla_x\frac{\delta}{\delta p}\mathcal{U}[p](x^{(j)})\cdot \eta
\end{align*}
\end{proof}

\section{Non-identifiability of Individual Trajectories}
\label{app:non_iden_traj}

While \cref{prop:unique_conservative_vector_field} shows that the canonical continuity equation \eqref{eq:canonical_kinetic_energy} can without loss of generality model any continuous evolution of marginals, we note that these marginals can generally be produced by many laws on paths. As a simple example, consider trajectories from a gradient-flow diffusion,
\begin{align}
    \dXt = -\nabla \Psi(X_t)\dt + \sigma \dWt.
    \label{eq:gradient_flow_sde}
\end{align}
Then, its population dynamics $(p_t)_{0 \le t \le 1}$ are determined by its Fokker-Planck equation, which can equivalently be expressed via the continuity equation \eqref{eq: continuity_eq} for some corresponding transport vector field $\nabla s_t$. In fact, it is well known that $\nabla s_t$ is determined by 
\begin{align*}
    \nabla s_t(x) = -\nabla \Psi(x) - \frac{\sigma^2}{2}\nabla \log \rho_t(x) = -\nabla \left( \Psi(x) + \frac{\sigma^2}{2}\log \rho_t(x)\right).
\end{align*}
Thus, when initialized at $p_0$, the ODE $\frac{d}{dt}x_t = \nabla s_t(x)$ will produce trajectories, which, when aggregated, produce the marginals $(p_t)_{0 \le t \le 1}$. However, these ODE trajectories will differ compared to the stochastic trajectories from the diffusion process \eqref{eq:gradient_flow_sde}. Similarly, non-conservative vector fields can produce the same population dynamics $(p_t)_{0 \le t \le 1}$. 

Additional knowledge is therefore needed to uniquely identify the law on paths from observed population dynamics $(p_t)_{0 \le t \le 1}$. Indeed, this is the central problem for SDE identifiability \citep{weinreb2018fundamental, lavenant2021towards, guan2024identifying, guangradient}. Recent work has shown that if we restrict candidate laws to gradient-flow SDEs with drift-diffusion $(-\nabla \Psi, \sigma^2)$, then as long as the observed marginals $(p_t)_{0 \le t \le 1}$ are not trivially at equilibrium, then the SDE parameters that identify the law on paths is in fact identifiable \citep{guangradient}. Similar identifiability results hold when the class of SDEs is assumed to be linear and when the marginals obey certain symmetry-breaking conditions  \citep{guan2024identifying}. However, in this work, we do not assume anything about the underlying paths. By only considering the evolution of marginals, we always evolve trajectories according to the canonical vector field in \eqref{eq: continuity_eq} for our theoretical results.

\section{Additional Experiment Details}
\label{sec:additional_experiment_appendix}
We present additional details about the datasets, implementation of \texttt{WLM} and baseline methods, and analysis of results for each of our four experiments. 

\paragraph{WLM Default Hyperparameters.}
In all experiments, we implement the potential energy model $\Psi_\theta: \R^{N \times d} \to \R$ as a neural network equipped with both attention and feed forward layers. We found that the attention mechanism helped the model learn more expressive interacting dynamics. Our default settings used $4$ attention heads and $4$ stacked blocks. Each block consists of a multi-head self-attention mechanism followed by a feed forward network. The default hidden dimensions is $64$ and the default inner dimension from the feed forward network is $512$. We used a learning rate of $lr=1e-4$ for $\Psi_\theta$ in all experiments. We detail changes from the default hyperparameters in each experiment subsection. We also note that since \texttt{WLM} is simulation-based, one important hyperparameter is the number of substeps used between marginals during training. While we always sample $5$ substeps at inference time to faithfully simulate from the finalized learned model, we interestingly found that using only $1$ substep between marginal often significantly improved the model, so we use this as default. To implement our loss, we by default use the Sinkhorn loss with $p=2$ from the geomloss package \url{https://www.kernel-operations.io/geomloss/}. The kernel blur parameter is estimated from the data. The \texttt{WLM} code is implemented in PyTorch. 

\paragraph{Hardware.}
All experiments were conducted on an HPC cluster, primarily on NVIDIA GPUs (such as the A100 or RTX 8000 series) on one GPU and $2-4$ CPU cores, with between $10-50$GB of memory per run. Depending on the size of the dataset, most experiments completed in between $3-15$ hours. We present the runtime breakdown per experiment below, noting that time complexity principally depends on the number of particles and training marginals, since \texttt{WLM} is trained via simulated rollouts. We explore the effect of mini-batching on runtime and performance for the Embryoid Body experiment in \cref{fig:w1_vs_runtime}.

\begin{table}[htbp]
    \centering
    \caption{Runtime per experiment}
    \vspace{0.1in}
    \begin{tabular}{l c c r r}
        \toprule
        \textbf{Experiment} & \textbf{Particles at $t=0$} & \textbf{Number of training marginals} & \textbf{Epochs} & \textbf{Hours} \\
        \midrule
        GF SDEs        & 1000 & 9  & 100,000 & 7.00 ($\pm$0.82) \\
        Boids          & 1000 & 49 & 30,000  & 10.88 ($\pm$1.38) \\
        Oceans (small) & 111  & 4  & 50,000  & 1.32 ($\pm$0.04) \\
        Oceans (big)   & 400  & 9  & 50,000  & 1.99 ($\pm$0.21) \\
        EB             & 1024 (minibatched) & 4  & 10,000  & 0.35 ($\pm$0.09) \\
        \bottomrule
    \end{tabular}
    \label{tab:experiment_metrics}
\end{table}

\subsection{Gradient-flow SDEs}
\label{sec:gradient_flows_appendix}
We consider the same five two-dimensional potentials used in the main experiments by \citet{guangradient}:
\begin{align}
\text{\textbf{Bohachevsky}} \qquad
V(x) &= 10\!\left(x_1^2 + 2x_2^2 - 0.3\cos(3\pi x_1) - 0.4\cos(4\pi x_2)\right)
\label{eq:pot-bohachevsky} \\[1em]
\text{\textbf{Oakley--O'Hagan}} \qquad
V(x) &= 5 \sum_{i=1}^2 \left(\sin(x_i) + \cos(x_i) + x_i^2 + x_i\right)
\label{eq:pot-oakley-ohagan} \\[1em]
\text{\textbf{Quadratic}} \qquad
V(x) &= 5\,\|x\|^2
\label{eq:pot-quadratic} \\[1em]
\text{\textbf{Styblinski--Tang}} \qquad
V(x) &= \tfrac{1}{2}\sum_{i=1}^2 \left(x_i^4 - 16x_i^2 + 5x_i\right)
\label{eq:pot-styblinski-tang} \\[1em]
\text{\textbf{Wavy plateau}} \qquad
V(x) &= \sum_{i=1}^2 \left( \cos(\pi x_i) + \tfrac{1}{2}x_i^4 - 3x_i^2 + 1 \right)
\label{eq:pot-wavy-plateau}
\end{align}
Data is then generated by choosing diffusivity $\sigma^2=1$. We sample $10$ marginals with equal spacing $\Delta t = 0.01$, initialized from a mean $0$ Gaussian, $p_0 \sim \mathcal{N}(0, 0.2)$. We generated the data using the Euler-Maruyama scheme, with $10$ substeps per $\Delta t$, i.e. $\Delta t_{EM} = 0.001$. Each of these SDEs converge to their stationary distribution at different rates and thus offers a range of different geometries and dynamics. We note that the Bohachevsky SDE almost immediately converges to its stationary distribution, whereas the Oakley-O'Hagan SDE for example is far from reaching its stationary distribution after $10$ steps.

\paragraph{WLM Implementation.}
We use the default hyperparameters and set the number of epochs to $100,000$. The friction learning rate is set to $1e-2$ when friction is learnable, and set to initial value $\gamma=1$.

\paragraph{Implementation of Baselines.}
We use default implementations for both \textsc{JKOnet}$^\ast$ and \textsc{nn-APPEX}, as done for experiments on the same benchmark dataset. See \citep[Appendix C]{terpin2024learning} and \citep[Section 6]{guangradient} respectively.

\begin{table*}[t]
\centering
\caption{Comparison of average $W_1$ distances for \textsc{JKOnet}*, \textsc{nn-APPEX}, and \texttt{WLM} across 5 Gradient-Flow SDEs in Paired and Unpaired settings for train (first $10$ marginals) and test (next $10$ marginals). }
\label{tab:full_gf_sde_results}
\resizebox{\textwidth}{!}{%
\begin{tabular}{llcccc}
\toprule
 &  & \multicolumn{2}{c}{\textbf{Paired}} & \multicolumn{2}{c}{\textbf{Unpaired}} \\
\cmidrule(lr){3-4} \cmidrule(lr){5-6}
\textbf{SDE} & \textbf{Method} & Train $W_1$ ($\pm$SE) & Test $W_1$ ($\pm$SE) & Train $W_1$ ($\pm$SE) & Test $W_1$ ($\pm$SE) \\
\midrule

\multirow{4}{*}{Bohachevsky} 
 & \textsc{JKOnet}* & $0.0814 \pm 0.0033$ & $0.0969 \pm 0.0018$ & $0.1969 \pm 0.0128$ & $0.3393 \pm 0.0149$ \\
 & \textsc{nn-APPEX} & $0.0798 \pm 0.0021$ & $0.0917 \pm 0.0008$ & $0.0696 \pm 0.0029$ & \textbf{0.0742} $\pm$ \textbf{0.0017} \\
 & \texttt{WLM} (learnable friction) & \textbf{0.0388} $\pm$ \textbf{0.0007} & \textbf{0.0488} $\pm$ \textbf{0.0011} & \textbf{0.0660} $\pm$ \textbf{0.0022} & $0.0907 \pm 0.0023$ \\
 & \texttt{WLM} ($0$ friction) & $0.3195 \pm 0.0389$ & $0.3838 \pm 0.0460$ & $0.2690 \pm 0.0323$ & $0.3241 \pm 0.0262$ \\
\midrule

\multirow{4}{*}{Oakley O'Hagan} 
 & \textsc{JKOnet}* & $0.0578 \pm 0.0025$ & $0.2067 \pm 0.0292$ & $0.1951 \pm 0.0340$ & $1.7709 \pm 0.3550$ \\
 & \textsc{nn-APPEX} & $0.0526 \pm 0.0011$ & $0.0514 \pm 0.0018$ & $0.0729 \pm 0.0024$ & $0.1375 \pm 0.0179$ \\
 & \texttt{WLM} (learnable friction) & \textbf{0.0415} $\pm$ \textbf{0.0006} & \textbf{0.0509} $\pm$ \textbf{0.0013} & \textbf{0.0430} $\pm$ \textbf{0.0005} & \textbf{0.0734} $\pm$ \textbf{0.0026} \\
 & \texttt{WLM} ($0$ friction) & $0.0704 \pm 0.0062$ & $0.2462 \pm 0.0370$ & $0.0616 \pm 0.0032$ & $0.2470 \pm 0.0396$ \\
\midrule

\multirow{4}{*}{Quadratic} 
 & \textsc{JKOnet}* & $0.0723 \pm 0.0046$ & $0.0753 \pm 0.0015$ & $0.2665 \pm 0.0415$ & $0.7161 \pm 0.0453$ \\
 & \textsc{nn-APPEX} & $0.0432 \pm 0.0015$ & \textbf{0.0365} $\pm$ \textbf{0.0005} & $0.0538 \pm 0.0019$ & $0.0631 \pm 0.0028$ \\
 & \texttt{WLM} (learnable friction) & \textbf{0.0339} $\pm$ \textbf{0.0008} & $0.0488 \pm 0.0021$ & \textbf{0.0391} $\pm$ \textbf{0.0009} & \textbf{0.0555} $\pm$ \textbf{0.0024} \\
 & \texttt{WLM} ($0$ friction) & $0.0505 \pm 0.0021$ & $0.0964 \pm 0.0085$ & $0.0651 \pm 0.0035$ & $0.1089 \pm 0.0062$ \\
\midrule

\multirow{4}{*}{Styblinski-Tang} 
 & \textsc{JKOnet}* & $0.1359 \pm 0.0186$ & $0.3791 \pm 0.0386$ & $0.2381 \pm 0.0409$ & $2.0202 \pm 0.3516$ \\
 & \textsc{nn-APPEX} & 0.1142 $\pm$ 0.0124 & \textbf{0.2643} $\pm$ \textbf{0.0206} & $0.2066 \pm 0.0243$ & $0.7183 \pm 0.0746$ \\
 & \texttt{WLM} (learnable friction) & $0.1164 \pm 0.0134$ & $0.3983 \pm 0.0505$ & {0.1147} $\pm$ {0.0132} & \textbf{0.7091} $\pm$ \textbf{0.0917} \\
 & \texttt{WLM} ($0$ friction) & \textbf{0.0861} $\pm$ \textbf{0.0075} & $0.3898 \pm 0.0724$ & \textbf{0.1134} $\pm$ \textbf{0.0119} & $0.7271 \pm 0.1101$ \\
\midrule

\multirow{4}{*}{Wavy Plateau} 
 & \textsc{JKOnet}* & 0.0759 $\pm$ 0.0046 & $0.2048 \pm 0.0293$ & $0.2827 \pm 0.0699$ & $3.2447 \pm 0.5395$ \\
 & \textsc{nn-APPEX} & $0.1116 \pm 0.0127$ & $0.2128 \pm 0.0071$ & $0.1085 \pm 0.0064$ & $0.3090 \pm 0.0298$ \\
 & \texttt{WLM} (learnable friction) & $0.0770 \pm 0.0064$ & \textbf{0.1383} $\pm$ \textbf{0.0057} & \textbf{0.0750} $\pm$ \textbf{0.0054} & \textbf{0.3002} $\pm$ \textbf{0.0306} \\
 & \texttt{WLM} ($0$ friction) & \textbf{0.0671} $\pm$ \textbf{0.0039} & $0.1598 \pm 0.0186$ & $0.0879 \pm 0.0058$ & $0.3249 \pm 0.0415$ \\
\bottomrule
\end{tabular}%
}
\end{table*}

\begin{figure}
    \centering
    \begin{subfigure}{0.45\textwidth}
        \centering
        \includegraphics[width=\linewidth]{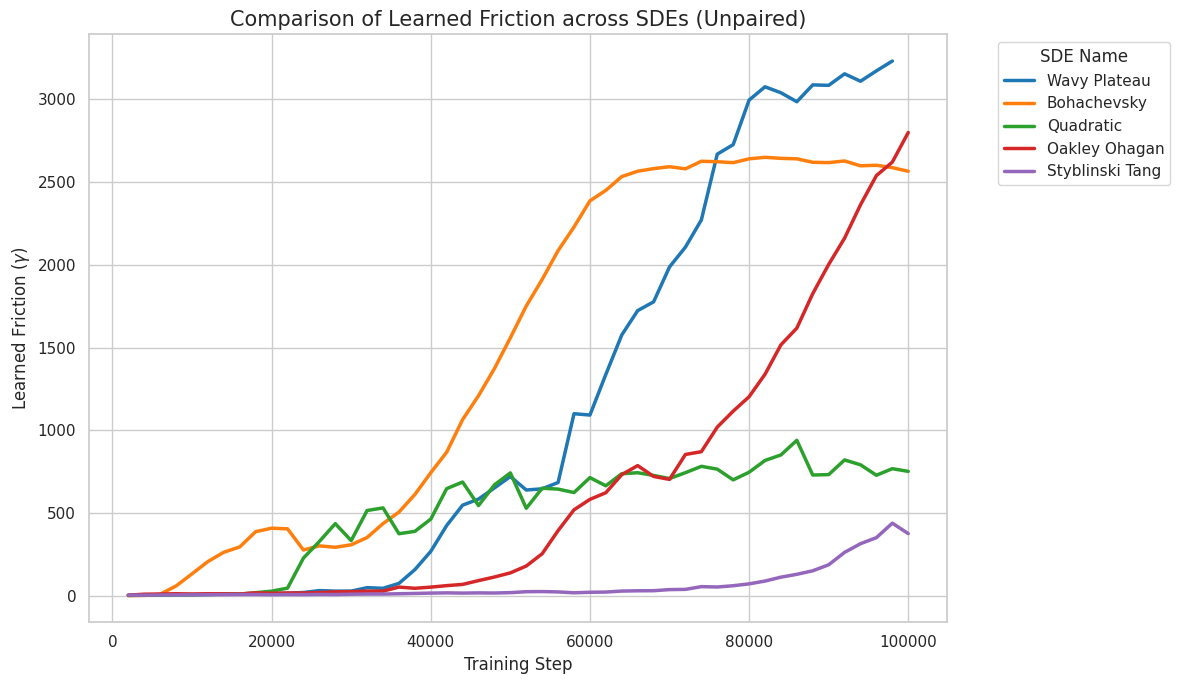}
        \caption{Learnable friction for gradient flows (unpaired setting)}
        \label{fig:friction_kill}
    \end{subfigure}
    \hfill
    \begin{subfigure}{0.45\textwidth}
        \centering
        \includegraphics[width=\linewidth]{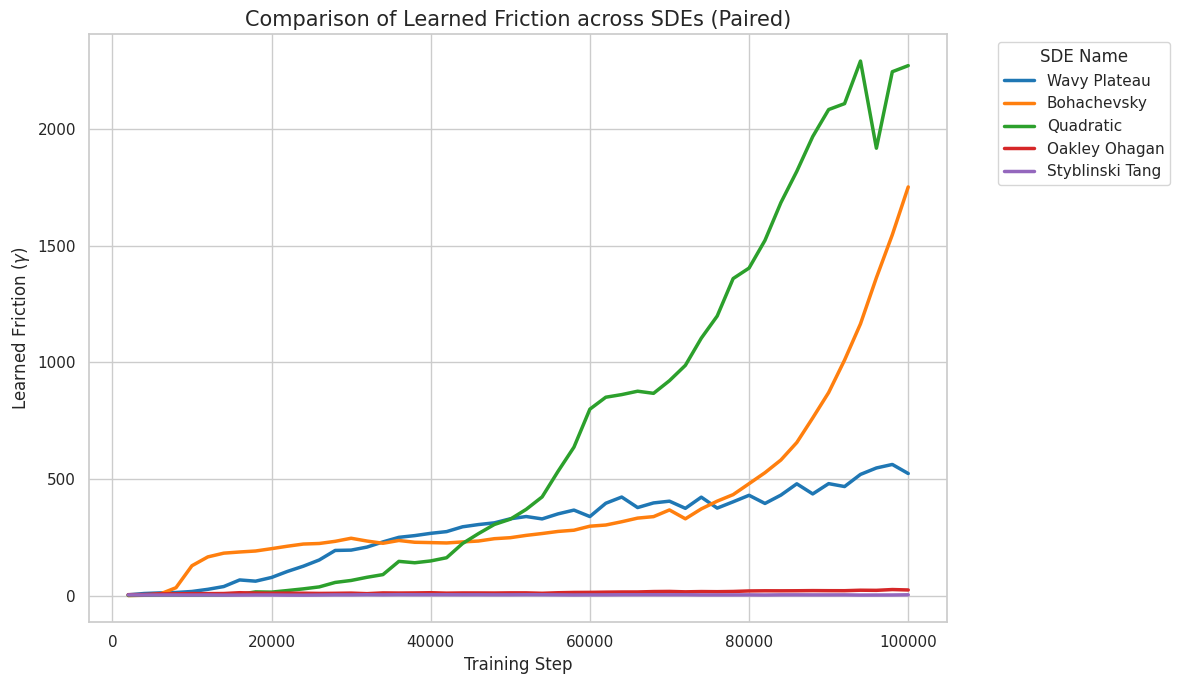}
        \caption{Learnable friction for  gradient flows (paired setting). }
        \label{fig:friction_nokill}
    \end{subfigure}
    
    \caption{\texttt{WLM} Learned Friction ($\gamma$) curve for each SDEs over $100,000$ training epochs. Minimum learned friction at $100k$ epochs is $\gamma =5.28$ (Styblinski-Tang paired setting), which collapses to the equilibrium distribution almost instantaneously.}
    \label{fig:friction_comparison_gf_sde}
\end{figure}

\subsection{Ocean Currents}
\label{sec:ocean_currents_appendix}

\paragraph{Dataset Description.}
As done in \citet{shen2024multi, berlinghieri2025oh, petrovic2025curly}, we obtain real ocean vortex data from the Gulf of Mexico, from HYbrid Coordinate Ocean
Model (HYCOM) \url{https://www.hycom.org/data/gomb0pt01/gom-reanalysis}. We follow the same data preprocessing as \citet{shen2024multi, petrovic2025curly} to reproduce the small vortex, and as done in \citet{berlinghieri2025oh} to reproduce the big vortex. 

\paragraph{WLM Implementation.}
To implement WLM for interpolation on both vortices, we use one attention head, and for the architecture, we increased the number of hidden dimensions to $256$ and reduced the feed forward inner dimension to $64$. We set learnable friction in both settings (with learning rate $1e-3$), but only model time as a feature for interpolation (small vortex), using $16$ time features to do so. For both tasks, we train for $50,000$ epochs and use EMA $0.9999$. We use weight decay $1e-2$ for the interpolation task on the small vortex and weight decay $1e-3$ for the forecast task on the big vortex.


\paragraph{Implementation of Baselines.}
With one parameter exception, we use the suggested implementations for both \textsc{SBIRR} and \textsc{Snap-MMD} for the oceans dataset used for interpolation and forecasting experiments in \citet{shen2024multi}
and \citet{berlinghieri2025oh} respectively. Due to memory overflow, to implement the iterative proportion fitting SB-solver in \textsc{SBIRR}, we use $10$ time substeps rather than $50$ substeps. We note that the corresponding results are comparable and sometimes better than reported in the original papers \citep{shen2024multi, berlinghieri2025oh}. 

For the action matching \citep{neklyudov2023action} and DICE \citep{blickhan2025dice} baselines on the oceans dataset, we use a $2$-layer MLP with hidden dimension $64$ and SiLU activations. We then ran each method using Adam with a constant learning rate of $10^{-4}$ for $2{,}000$ iterations (\textsc{AM}, \textsc{UAM}, \textsc{sAM}) and $5000$ iterations (\textsc{DICE}), with batch size $256$ and gradient clipping at norm $1$. For \textsc{sAM} we use diffusion coefficient $\sigma=0.1$. We note that DICE \citep{blickhan2025dice} introduced a more stable action matching loss over discrete observations, which mitigates the over-optimization of a space-independent, but time-inhomogeneous potential. Despite this, we observe that \textsc{DICE} also diverges significantly during training, likely due to the sparsity of the data, which has only $111$ particles per marginal, with large time gaps between observations. We therefore report \textsc{DICE} at its best checkpoint (epoch $3500$) rather than at the end of training.

\subsection{Embryonic sc-RNA data}
\label{sec:cells_appendix}

For the EB dataset \citep{moon2019visualizing}, we follow standard pre-processing of $5$ dimensional PCA and whitening the data, as first done in \citet{tong2020trajectorynet}, and subsequently replicated in many experiments \citep{neklyudov2023computational, kapusniak2024metric, tong2023simulation}. For leave-one-out interpolation, we train $3$ separate models, using the remaining $4$ marginals to predict the last one. We use the provided RNA velocity to simulate our rollout from $p_0$ during training, and to simulate from the previous observed marginal $p_{h-1}$ when interpolating the holdout, but we do not notice a significant difference between this choice and starting with $v_0=0$. Indeed, it has been conjectured that the velocity for this dataset may not be highly informative \citep{tong2020trajectorynet}, and we also recover moderate to high dissipation, which places less emphasis on the importance of the initial velocities.

\paragraph{WLM implementation.}
We modify the default architecture by dropping the feed forward inner dimension to $256$. We also use weight decay $1e-3$ and dropout $0.1$. For the training loss, we use the Sinkhorn divergence loss with $p=1$. For learnable friction, we initialize $\gamma=1$ and friction learning rate $1e-2$. We run for $10,000$ epochs and use an EMA of $0.999$.

\paragraph{Implementation of baselines.}

We implement the action matching \citep{neklyudov2023action} and DICE \citep{blickhan2025dice} baselines with the same implementation specified for the ocean vortex interpolation. We report \textsc{DICE} at its best checkpoint (epoch $3000$) rather than at the end of training.

\begin{figure}[h!] 
    \centering
    \includegraphics[width=0.5\linewidth]{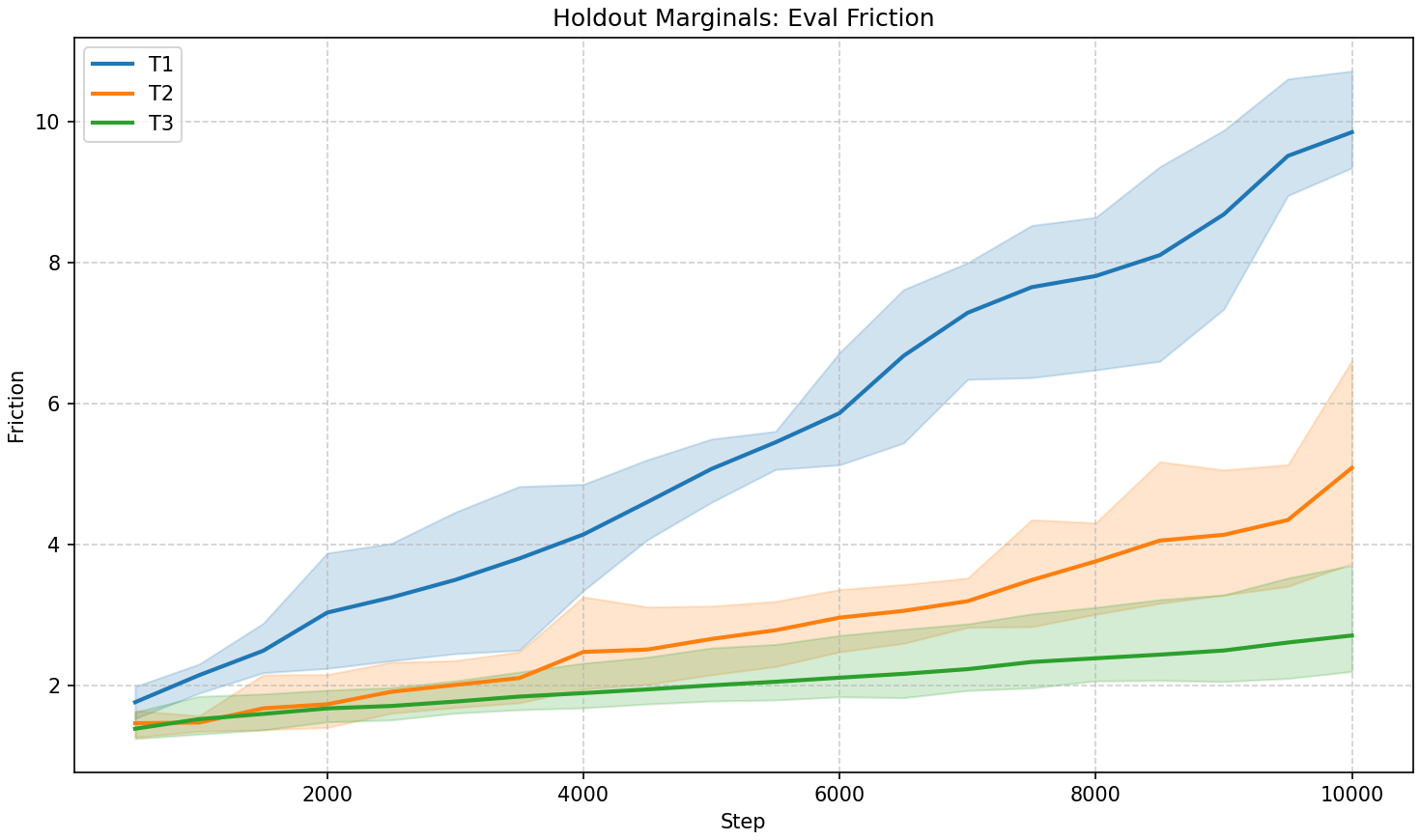} 
    \caption{Learnable friction on the Embryoid Body (EB) dataset for different holdout times.}
    \label{fig:eb_friction_holdout}
\end{figure}

\begin{figure}[h!] 
    \centering
    \includegraphics[width=0.5\linewidth]{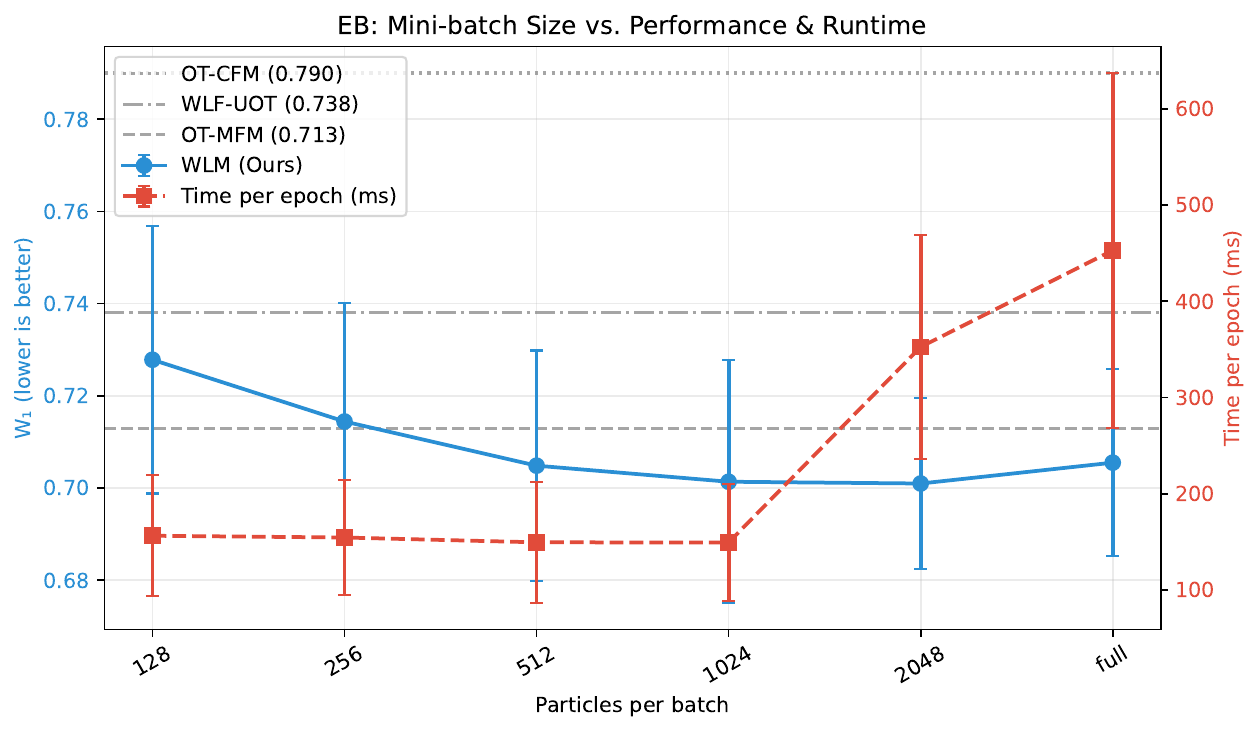} 
    \caption{Runtime vs. performance when performing mini-batching with \texttt{WLM} averaged over $3$ random seeds.}
    \label{fig:w1_vs_runtime}
\end{figure}

\subsection{Boids}
\label{sec:boids_appendix}
We implement the Boids algorithm using a Vicsek-style interacting particle system based on the classic Boids algorithm. By default, we simulate $N=1000$ agents in $\R^2$ adhering to local interaction rules, which is essentially the three Boids interaction rules with a boundary condition. In particular, at each time step $t$, the velocity is updated based on the state of its neighbours, and the acceleration is determined by the weighted sum of four distinct forces:
\newpage
\begin{itemize}
    \item Separation: short-range repulsion to avoid colliding wth flockmates. Agents move away from flockmates within an inner radius $R_{inner}$.
    \item Alignment: velocity matching with flockmates. Steer towards average velocity difference of all neighbours within an outer radius $R_{outer}$.
    \item Cohesion: Long-range attraction. Steer towards average relative position of flockmates within $R_{outer}$.
    \item Boundary: Restorative force applied when a Boid exceeds the simulation boundary, to steer back to origin.
\end{itemize}
Our default parameters are $R_{inner}=0.3$, $R_{outer}=1.0$, and different weights for enacted forces based on the four rules. By default, separation force is weighted $0.1$, alignment is weighted $0.3$, cohesion is weighted $0.005$, and boundary is weighted $0.5$, and triggered at magnitude $5.0$ away from the origin.

\begin{table}[!t]
\centering
\caption{Boids: Average $W_1$ distances between true and learned marginals for different initializations ($v_0$) on the same random seed. We distinguish between train times (first $50$) and forecast (next $50$).}
\label{tab:boids_v0_experiment}

\footnotesize          
\renewcommand{\arraystretch}{0.85} 
\setlength{\tabcolsep}{4pt} 

\begin{tabular}{lcc}
\toprule
\textbf{$v_0$} & Train $W_1$ ($\pm$SE) & Forecast $W_1$ ($\pm$SE) \\
\midrule
True & $0.530 \pm 0.034$ & $1.523 \pm 0.051$ \\
DICE & $0.684 \pm 0.027$ & $1.521 \pm 0.044$ \\
Zero & $0.726 \pm 0.026$ & $1.645 \pm 0.052$ \\
\bottomrule
\end{tabular}
\end{table}

\paragraph{WLM Implementation.}
We use the default hyperparameters with $p=1$ for the Sinkhorn divergence, and run for $30,000$ epochs. 

\paragraph{Implementation of Baselines.}
We use default implementations for both \textsc{JKOnet}$^\ast$ and \textsc{nn-APPEX}, as done for experiments on the same benchmark dataset. See \citep[Appendix C]{terpin2024learning} and \citep[Section 6]{guangradient} respectively.

\begin{figure*}[t] 
    \centering
    
    \def\rowTop{0.73}
    \def\rowBot{0.23}
    \def\labelSep{-0.015} 
    \def\colY{1.05} 

    \begin{subfigure}{0.965\linewidth}
        \centering
        \begin{tikzpicture}
            \node[anchor=south west, inner sep=0] (image) at (0,0) {
                \includegraphics[width=\linewidth]{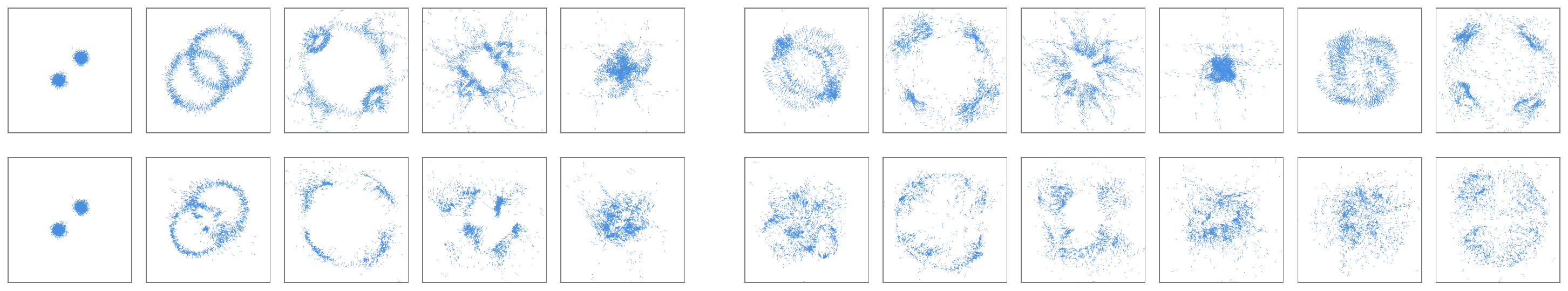}
            };
            
            \begin{scope}[x={(image.south east)}, y={(image.north west)}]
                
                \node[rotate=90, anchor=south, font=\tiny] at (\labelSep, \rowTop) {Ground Truth};
                \node[rotate=90, anchor=south, font=\tiny] at (\labelSep, \rowBot) {\scalebox{1.3}{\texttt{WLM}}};

                \foreach \x/\t in {0.044/0.0, 0.133/5.0, 0.221/10.0, 0.310/15.0, 0.398/20.0} {
                    \node[anchor=south, font=\tiny] at (\x, \colY) {$t=\t$};
                }
                \foreach \x/\t in {0.513/25.0, 0.602/30.0, 0.690/35.0, 0.779/40.0, 0.867/45.0, 0.956/50.0} {
                    \node[anchor=south, font=\tiny] at (\x, \colY) {$t=\t$};
                }
            \end{scope}
        \end{tikzpicture}
        \caption{Generalization to unseen two component Gaussian mixture}
        \label{fig:app_boids_1}
    \end{subfigure}
    
    \vspace{0.5em}

    \begin{subfigure}{0.965\linewidth}
        \centering
        \begin{tikzpicture}
            \node[anchor=south west, inner sep=0] (image) at (0,0) {
                \includegraphics[width=\linewidth]{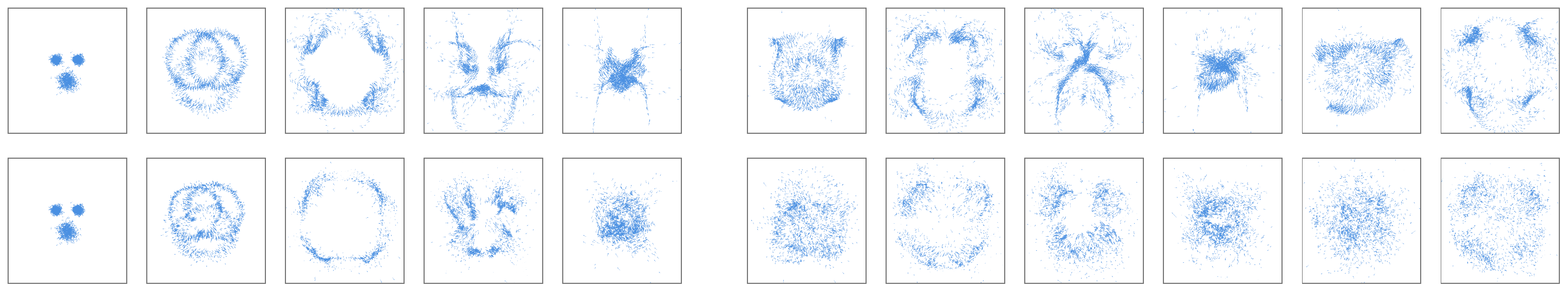}
            };
            
            \begin{scope}[x={(image.south east)}, y={(image.north west)}]
                \node[rotate=90, anchor=south, font=\tiny] at (\labelSep, \rowTop) {Ground Truth};
                \node[rotate=90, anchor=south, font=\tiny] at (\labelSep, \rowBot) {\scalebox{1.3}{\texttt{WLM}}};
            \end{scope}
        \end{tikzpicture}
        \caption{Generalization to unseen three component Gaussian mixture}
        \label{fig:app_boids_2}
    \end{subfigure}

    \vspace{0.5em}

    \begin{subfigure}{0.965\linewidth}
        \centering
        \begin{tikzpicture}
            \node[anchor=south west, inner sep=0] (image) at (0,0) {
                \includegraphics[width=\linewidth]{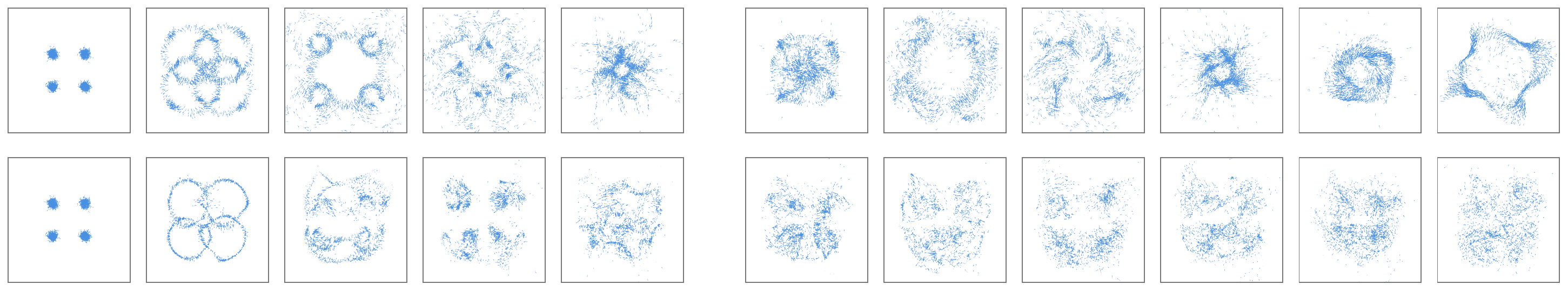}
            };
            
            \begin{scope}[x={(image.south east)}, y={(image.north west)}]
                \node[rotate=90, anchor=south, font=\tiny] at (\labelSep, \rowTop) {Ground Truth};
                \node[rotate=90, anchor=south, font=\tiny] at (\labelSep, \rowBot) {\scalebox{1.3}{\texttt{WLM}}};
            \end{scope}
        \end{tikzpicture}
        \caption{Generalization to unseen four component Gaussian mixture}
        \label{fig:app_boids_3}
    \end{subfigure}

    \caption{
        \textbf{Predicting Boids on unseen dynamics:} 
        Qualitative comparison of ground truth Boids dynamics (top row of each panel) versus the predicted \texttt{WLM} dynamics (bottom row of each panel) for three unseen Gaussian mixture initial distributions. \texttt{WLM} was trained on $50$ frames whose population was a centered Gaussian (see \cref{fig:boids_comparison}).
    }
    \label{fig:app_boids_generalization}
\end{figure*}


\end{document}